\definecolor{parula1}{rgb} {0.0000, 0.4470, 0.7410 } 
\definecolor{parula2}{rgb} {0.8500, 0.3250, 0.0980 } 
\definecolor{parula3}{rgb} {0.9290, 0.6940, 0.1250 } 
\definecolor{parula4}{rgb} {0.4940, 0.1840, 0.5560 } 
\definecolor{parula5}{rgb} {0.4660, 0.6740, 0.1880 } 
\definecolor{parula6}{rgb} {0.3010, 0.7450, 0.9330 } 
\definecolor{parula7}{rgb} {0.6350, 0.0780, 0.1840 } 
\newcommand{\tr}    {\intercal}
\newcommand{\kron}  {\otimes}
\newcommand{\RR}     {\mathbb{R}}
\renewcommand{\P} 	{\mathbb P} 	
\newcommand{\mF}    {\mathcal{F}}
\newcommand{\mP}    {\mathcal{P}} 
\newcommand{\vx} 	{\mathbf{x}}
\theoremstyle{plain}
\newtheorem{theorem}{Theorem}
\newtheorem{corollary}{Corollary}
\newtheorem{prop}{Proposition}
\newtheorem{lemma}{Lemma}
\theoremstyle{definition}
\newtheorem{definition}{Definition}
\theoremstyle{remark}
\newtheorem{remark}{Remark}
\DeclareMathOperator{\veco} {vec}
\DeclareMathOperator{\vech} {vech}
\DeclareMathOperator{\rank} {rank}
\DeclareMathOperator{\jac}  {\mathsf{D}}
\definecolor{iccvblue}{rgb}{0.21,0.49,0.74}
\title{\LARGE An Algebraic Geometry Approach to\\ Viewing Graph Solvability}
\author{Federica Arrigoni, Kathl\'en Kohn, Andrea Fusiello, Tomas Pajdla
}
\begin{document}
\maketitle

\begin{abstract}
 The concept of viewing graph solvability has gained significant interest in the context of structure-from-motion. A viewing graph is a mathematical structure where nodes are associated to cameras and edges represent the epipolar geometry connecting overlapping views. Solvability studies under which conditions the cameras are uniquely determined by the graph. In this paper we propose a novel framework for analyzing solvability problems based on Algebraic Geometry, demonstrating its potential in understanding structure-from-motion graphs and proving a conjecture that was previously proposed.
 \end{abstract}

\section{Introduction}
\label{sec:intro}

In recent years, there has been a notable increase in interest surrounding the concept of viewing graph solvability in the field of Computer Vision \cite{LeviWerman03,RudiPizzoliAl11,TragerHebertAl15,TragerOssermanAl18,ArrigoniFusielloAl21,ArrigoniPajdlaAl23}. This concept plays a pivotal role in the domain of structure-from-motion (SfM) \cite{CrandallOwensAl11,OzyesilVoroninskiAl17,ChatterjeeGovindu17,SarlinLindenbergerAl23,ManamGovindu23}, which aims to reconstruct three-dimensional scenes from a multitude of images. A \emph{viewing graph} \cite{LeviWerman03} is a mathematical structure in which the nodes represent the cameras that capture the scene and the edges connect the cameras that have overlapping views. 
More precisely, an edge is present between two nodes if and only if it is possible to estimate
the geometric relationship between the two cameras, encoded in the fundamental matrix (assuming an uncalibrated scenario).
This defines a constraint system that is {classically} considered \emph{solvable} if the information encoded in the fundamental matrices uniquely determines all cameras in the scene, up to a global projective transformation (see Figure \ref{fig:problem_solvability}). 
Despite significant advances have been recently made both from the theoretical and practical point of view \cite{ArrigoniFusielloAl21,ArrigoniPajdlaAl23}, viewing graph solvability
still presents open issues, as discussed in the next subsection.

\subsection{Related Work}
\label{sec:related_work}

It is well known that a \emph{single} fundamental matrix uniquely determines the two perspective cameras up to a projective transformation \cite{HartleyZisserman04}. However, when considering \emph{multiple} fundamental matrices attached to the edges of a viewing graph, there may be cases with many solutions or no solution at all. 

A viewing graph is called \emph{solvable}  if, 
for almost all choices of cameras, there are no other sets of cameras yielding the same fundamental matrices (up to global projective transformation). In other terms, 
it is assumed that a solution exists (i.e., a set of cameras compliant with the given fundamental matrices), and the question is whether such solution is the only one or there are more.
The concept of \emph{solving} viewing graph (later called \emph{solvable} by \cite{TragerOssermanAl18}) was first introduced in \cite{LeviWerman03} where small incomplete graphs (up to six cameras) were manually analyzed, by reasoning in terms of how to uniquely recover the missing fundamental matrices from the available ones. 

The authors of \cite{LeviWerman03} also derived a \emph{necessary condition} for solvability, namely the property that all the nodes have degree at least two and no two adjacent nodes have degree two. Later, additional necessary conditions were developed: a solvable graph must be biconnected \cite{TragerOssermanAl18}; it must have at least $(11n-15)/7$ edges, with $n$ being the number of nodes \cite{TragerOssermanAl18}; it must be bearing rigid \cite{ArrigoniFusielloAl22}. The latter means that, as expected, a graph that is solvable with unknown intrinsic parameters is also solvable when they are known \cite{ArrigoniFusiello18,TronCarloneAl15,KarimianTron17}. 

\emph{Sufficient conditions} are also available: in \cite{TragerHebertAl15} it is proved that those graphs which are constructed from a 3-cycle by adding nodes of degree 2 one at a time are solvable; \cite{TragerOssermanAl18} introduces specific ``moves'' which can be applied to a graph, possibly transforming it into a complete one, in which case the graph is solvable.

\begin{figure}[t]
\centering
\usetikzlibrary{fit,shapes.arrows}
\begin{tikzpicture}[scale=1.5, every node/.style={circle, draw, font=\footnotesize\sffamily}]
  \node[above, draw=none,] at (1, 2) {\textbf{Viewing Graph}};
  \node (P1) at (0, 0) {$P_1$};
  \node (P2) at (0, 2) {$P_2$};
  \node (P3) at (2, 2) {$P_3$};
  \node (P4) at (2, 0) {$P_4$};
  \node (P5) at (1, 1) {$P_5$};
  \draw[-] (P1) -- (P2) node[draw=none,midway,above left] {$F_{12}$};
  \draw[-] (P2) -- (P3) node[draw=none,midway,above] {$F_{23}$};
  \draw[-] (P3) -- (P4) node[draw=none,midway,below right] {$F_{34}$};
  \draw[-] (P4) -- (P1) node[draw=none,midway,above] {$F_{41}$};
  \draw[-] (P1) -- (P5) node[draw=none,midway,above left] {$F_{15}$};
  \draw[-] (P3) -- (P5) node[draw=none,midway,above left] {$F_{53}$};
 \node[anchor=west,draw=none] at (3.8, 1) {$P_1, P_2, P_3, P_4, P_5$};
 \node[above, draw=none,] at (4.5, 1) {\textbf{Cameras}};
 \coordinate[] (a) at (2.7, 1); 
 \coordinate[] (b) at (3.6, 1);
 \node[single arrow,  draw=black, fill=none, 
      minimum width = 10pt, single arrow head extend=3pt,
      minimum height=10mm, fit= (a) (b)] {}; 
      \node[above, draw=none] at ($(a)!0.5!(b)$) {?}; 
\end{tikzpicture}
    \caption{The solvability problem considers the following theoretical question: given a set of fundamental matrices encoded in a graph, how many camera configurations are compliant with such fundamental matrices?
    }
    \label{fig:problem_solvability}
\end{figure}
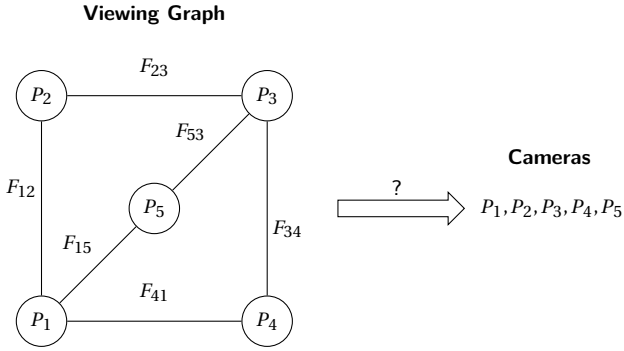

In fact, necessary or sufficient conditions alone are not enough to classify all possible cases so a \emph{characterization} is required. In this respect, the authors of \cite{TragerOssermanAl18,ArrigoniFusielloAl21} study solvability using principles from Algebraic Geometry.
Specifically, a polynomial system of equations was derived in \cite{TragerOssermanAl18}, so that solvability can be tested by counting the number of solutions of the system with algebraic geometry tools (e.g., Gr\"obner basis computation). Building on \cite{TragerOssermanAl18}, the authors of \cite{ArrigoniFusielloAl21} improve efficiency by deriving a simplified polynomial system with fewer unknowns. Still, the largest example tested in \cite{ArrigoniFusielloAl21} is a graph with 90 nodes, which is far from the size of structure-from-motion datasets appearing in practice. 

The main drawback of \cite{TragerOssermanAl18,ArrigoniFusielloAl21} is that solving polynomial equations is computationally highly demanding, therefore limiting the practical usage of this characterization of solvability. For this reason, the related notion of \emph{finite solvability} has been explored \cite{TragerOssermanAl18}. Specifically, a graph is called
\emph{finitely solvable} if, 
for almost all choices of cameras, there is a \emph{finite} set of cameras that gives the same fundamental matrices (up to global projective transformation).
This concept represents a proxy for (unique) solvability since it does not exclude the presence of more than one solution (e.g., two distinct solutions); however, it has been shown to be more practical since it can be deduced from the rank of a suitable matrix. Later, the authors of \cite{ArrigoniPajdlaAl23} improved the efficiency of this formulation and
developed a method to partition an unsolvable graph into maximal components that are finitely solvable. 

Problems related to solvability, which are not addressed in this paper, include the \emph{compatibility} of fundamental matrices, namely, whether a camera configuration exists that  produce the given fundamental matrices \cite{SenguptaAmirAl17,BratelundRydell23}, and the practical task of retrieving cameras from fundamental matrices \cite{SinhaPollefeysAl04,KastenGeifmanAl19,ColomboFanfani21,MadhavanFusielloAl24}.

\subsection{Contribution}

In the wake of the emerging field of Algebraic Vision \cite{KileelKohn23}, in this work we advance the understanding of viewing graphs by focusing on the notion of \emph{finite solvability}.
The main contributions can be summarized as follows: 

\begin{itemize}
    
    \item We derive a new formulation of the problem that is more direct (hence more \emph{intuitive}) than previous work, as our equations explicitly involve cameras and fundamental matrices. Previous sets of equations \cite{TragerOssermanAl18,ArrigoniPajdlaAl23}, instead, are harder to interpret, as they involve unknown projective transformations representing the problem ambiguities.

    \item We show that, by evaluating  the rank of the \emph{Jacobian matrix} of our polynomial equations in a fabricated solution, we can
    test finite solvability. 
    It is not immediate that this Jacobian check can assess the presence of a \emph{finite} number of solutions overall, being designed as a local analysis.
    Our proof, based on the Fiber Dimension Theorem \cite{Shafarevich13}, confirms a conjecture made in our preliminary work \cite{ArrigoniFusielloAl24}.

    \item Our method for testing finite solvability naturally extends to an algorithm for \emph{graph partitioning} into the maximal components that are finite solvable, to be applied to unsolvable cases with infinitely many solutions. The number of  unknowns depends on the number of nodes in the graph, that are typically much inferior to the number of edges used by previous work \cite{TragerOssermanAl18,ArrigoniPajdlaAl23}. This permits us to set the state of the art in terms of \emph{efficiency} on large graphs coming from SfM datasets. 
\end{itemize}

This paper is an extended version of our preliminary study \cite{ArrigoniFusielloAl24}. The manuscript is organized as follows. Section \ref{sec_theory} reviews relevant background on solvability and finite solvability. Section \ref{sec_equations} presents our theoretical contributions and introduces the set of polynomial equations employed in our formulation. Section \ref{sec_method} details our approach for testing finite solvability and extracting maximal components. Section~\ref{sec_implementation} reports formulas for derivatives, useful both for theory and practice. Experiments on synthetic and real viewing graphs are reported in Section \ref{sec_experiments}, while the conclusion is drawn in Section \ref{sec_conclusion}.

\section{Background}
\label{sec_theory}

Let $P_1, \dots, P_n $ denote $n$ uncalibrated cameras, represented by ${3 \times 4}$ full-rank matrices up to scaling, identified with elements of $\P^{11}$.
Let ${G}=({V},{E})$ be an undirected graph with node set ${V}=\{1, \dots, n\}$ and edge set ${E} \subseteq \{1, \dots, n\} \times \{1, \dots, n\}$ representing a \emph{viewing graph} of an uncalibrated structure-from-motion problem. We denote the cardinality of the vertex set with $n=|V|$, the number of edges with 
$m=|E|$ and the fundamental matrix of  $(i,j) \in {E} $ with $F_{ij}$. 
We use the following terminology: given a graph $G = (V,E)$, a \emph{configuration} is a map $\mP: V \to \P^{11}\text{ (full rank)}$ that assigns nodes to cameras. 
A \emph{framework} is a pair $(G,\mP)$, where $G
= (V, E)$ is a graph and $\mP$ is a configuration. 

Fundamental matrices are {equivalence classes of rank two} $3 \times 3$ matrices up to {non-zero} scaling, identified with elements of $\P^8$. They are assigned to edges via the map $ \mF_G(\mP) = [\ldots \, \mF({P_i,P_j}) \ldots]$  where $\mF({P_i,P_j})$ evaluates the fundamental matrix $F_{ij}$ on edge $(i,j) \in E$.
 Hence:
\begin{equation} \label{eq:fundamentalMatrixMap}
\mF:  \P^{11} \times \P^{11} \to \P^{8},  \quad   (P_i, P_j) \mapsto  F_{ij},
\end{equation}
where $F_{ij}$ is the fundamental matrix defined by  cameras $P_j$ and $P_i$. 
One way of specifying this map entry-wise is:
\begin{equation}
 [F_{ij}]_{h,k} = (-1)^{h+k} \det 
 \begin{bmatrix}
 P_i^k \\
 P_j^h
 \end{bmatrix},
\label{eq:Fuv_det}
\end{equation}
where $P_i^k$ denotes the $2 \times 4$ sub-matrix of camera $P_i$ obtained by removing row $k$ (and similarly for $P_j^h$ with row $h$). {Note that Eq.~\eqref{eq:Fuv_det} gives the zero matrix if $P_i$ and $P_j$ have coincident centres, meaning that the map $\mathcal{F}$ is undefined in the projective sense: in this scenario, it is known that the fundamental matrix is not uniquely defined \cite{HartleyZisserman04}. Therefore, we assume henceforth  that cameras have distinct centres.}

In particular, since \eqref{eq:Fuv_det} is a polynomial, we see that this is 
an \emph{algebraic map}, i.e.,  a function between algebraic varieties given locally by rational functions. 
For us, an \emph{algebraic variety} is the solution set of a system of polynomial equations. 

The \textbf{key question} is the following: 
\begin{quote}
    \emph{given a framework $(G,\mP_0)$, how many configurations $\mP$  exist yielding the same fundamental matrices?}
\end{quote}
In algebraic terms, we want to study the {cardinality} of the fibers (i.e., pre-images of points) of $\mF_G$; hence the question can be rephrased as: 
\begin{quote}
\emph{
what is the {cardinality} of $\mF_G^{-1}(\mF_G(\mP))$}?
\end{quote}
%
In formulating this question, we identify all configurations that are projectively equivalent. For instance, if we state that a configuration is unique, this is always intended up to a global projective transformation, which is an element of $\mathrm{PGL}_4$, the Projective General Linear Group on $\mathbb{P}^3$.

\begin{definition}[Solvable framework \cite{TragerOssermanAl18}]
Let $\mP = \{ P_1, \dots, P_n \}$ be a configuration of cameras, and let ${G}$ be a graph. The framework $({G},\mP)$ is called \emph{solvable} 
if all camera configurations yielding the same fundamental matrices as $\mP$ are obtained from $\mP$ via a global projective transformation.
In other words, $\mF_G^{-1}(\mF_G(\mP))$ is a single point, modulo $\mathrm{PGL}_4$.
\end{definition}

Studying the solvability of frameworks requires considering the actual camera configuration and accounting for special cases, such as collinear centers. To avoid this, a generic configuration is typically considered, leading to another concept of solvability, which is a property of the graph itself.

\begin{definition}[Solvable graph\cite{TragerOssermanAl18}]
A graph ${G}$ is called \emph{solvable} if it is solvable for a \emph{generic} configuration of cameras. In other words, ${G}$ is solvable if and only if, generically, the non-empty fibers of $\mF_G$ are points, modulo $\mathrm{PGL}_4$. 
\label{def_solvable}
\end{definition}

Solvability, in this context, does not concern finding a specific solution (i.e., a camera configuration producing given fundamental matrices). Rather, it focuses on
counting the solutions,  assuming at least one solution exists.
This interpretation is consistent with prior research \cite{TragerOssermanAl18, ArrigoniFusielloAl22}.
%

Determining the solvability of a graph requires solving a polynomial system of equations \cite{TragerOssermanAl18, ArrigoniFusielloAl22}. This process is computationally demanding, rendering it prohibitive for large or dense graphs often encountered in practice.
%
A relaxed notion is \emph{finite solvability},  requiring a finite number of solutions, as opposed to one solution. 

\begin{definition}[Finite solvable graph\cite{TragerOssermanAl18}]
A graph ${G}$ is called \emph{finite solvable} if and only if,
 generically, 
 the non-empty fibers of  $\mF_G$ are finite, modulo $\mathrm{PGL}_4$. 
\end{definition}

\begin{remark}
{Checking the finite solvability of graphs is computationally more feasible because it can be checked locally.}
In the  formulation of \cite{TragerOssermanAl18}, polynomial equations are derived by reasoning on the problem ambiguities, whose solution set forms a smooth 
algebraic variety endowed with a group structure.  
This property implies that the dimension of this variety  coincides with the dimension of its tangent space at the identity, which can be computed efficiently. Specifically, since the tangent space is a linear space, its dimension reduces to the rank of a linear system of equations. This dimension reveals whether the original polynomial system admits a finite number of solutions or, equivalently, whether the graph is finitely solvable.
\end{remark}

Our approach targets the same notion of finite solvability but with a different polynomial system.
In contrast to \cite{TragerOssermanAl18} (later improved by \cite{ArrigoniPajdlaAl23}), our equations directly involve cameras and fundamental matrices, thereby gaining efficiency and interpretability by design. However, this comes at the price of loosing the group property (cameras are not invertible matrices), thus requiring a different mathematical approach.

\begin{remark}
 Although finite solvability is only a necessary condition for solvability, it remains a valuable property. It can be interpreted as a \emph{local solvability}, meaning that  the solution is unique within a neighborhood of the given configuration.      
\end{remark}

\section{Theoretical Results}
\label{sec_equations}

This section is devoted to our theoretical results, which set the basis for the proposed method for checking finite solvability. We first prove a new characterization of the problem and then detail our choice of polynomial equations.

\subsection{Characterization of Finite Solvability}

Our goal is to study the \emph{generic} finite solvability of a graph $G$, i.e.,  we ask whether 
$\mF_G^{-1}(\mF_G(\mP))$ for \emph{generic} $\mP$ is a finite set or an infinite one (modulo $\mathrm{PGL}_4$), which is equivalent to studying the dimension of $\mF_G^{-1}(\mF_G(\mP))$.
The dimension of an algebraic variety intuitively quantifies the number of independent parameters required to describe points on the variety, much like the dimension of a vector space or a manifold. While we omit a formal definition here, we note that a variety has dimension 0 if and only if it consists of a finite number of points \cite[Chap.\ 1]{Shafarevich13}. 

To be more concrete, one can assign random cameras $\mP_0$ to nodes of $G$ and compute the fundamental matrices using $\mF_G(\mP_0)$ with Equations \eqref{eq:fundamentalMatrixMap} and \eqref{eq:Fuv_det}. The task is then to determine how many camera configurations produce the same fundamental matrices as $\mP_0$, modulo $\mathrm{PGL}_4$.
%
This is a \emph{global} question, but it can be addressed through a \emph{local} analysis, made possible by  Proposition~\ref{lemma:Jcheck->solvable} that we are going to prove at the end of the section, after recalling some results from Algebraic Geometry. 

\begin{lemma}[Fiber Dimension Theorem]
\label{lemma:constant_fiber_dim}
If $f: X \to Y$ is an algebraic map between irreducible varieties (over $\mathbb{C}$), then 
\begin{equation}
   \dim X = \dim f^{-1}(f(x)) + \dim \mathrm{im}(f) 
\end{equation}
for almost all $x \in X$, where $\mathrm{im}$ denotes the image of the map.
\end{lemma}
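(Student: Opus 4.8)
The plan is to prove the equality by reducing to the case of a dominant map and then sandwiching the generic fiber dimension between $n-m$ from below and from above, where $n=\dim X$ and $m=\dim\ima(f)$. First I would replace the target $Y$ by the Zariski closure $\overline{\ima(f)}$: since $X$ is irreducible this closure is an irreducible subvariety of $Y$, its dimension equals $\dim\ima(f)$, and $f$ becomes a dominant map $X\to\overline{\ima(f)}$. It therefore suffices to show that, for a dominant map of irreducible varieties, the fibers have dimension exactly $n-m$ over a dense open subset $U$ of the target. The ``almost all $x\in X$'' formulation then follows for free: $f^{-1}(U)$ is a dense open subset of $X$ because $f$ is dominant, and every $x$ in it satisfies $f(x)\in U$, hence $\dim f^{-1}(f(x))=n-m$.

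For the lower bound I would use that a point $y\in\overline{\ima(f)}$ is, locally, the common zero set of $m=\dim\overline{\ima(f)}$ regular functions (a system of parameters on the target). Pulling these back along $f$ exhibits every nonempty fiber $f^{-1}(y)$ as the zero locus in $X$ of $m$ functions, so by Krull's principal ideal theorem each of its irreducible components has codimension at most $m$, i.e.\ dimension at least $n-m$; this bound holds for \emph{every} $y$ in the image, with no genericity needed. For the upper bound I would pass to function fields: dominance gives an inclusion $K(Y)\hookrightarrow K(X)$ with $\operatorname{trdeg}_{K(Y)}K(X)=n-m$. Choosing a transcendence basis $t_1,\dots,t_{n-m}$ of this extension, the $t_i$ are rational functions on $X$ that, on a suitable open set, define a dominant and generically finite map from $X$ to $\overline{\ima(f)}\times\mathbb{A}^{n-m}$ over the target (generic finiteness because $K(X)$ is algebraic over $K(Y)(t_1,\dots,t_{n-m})$). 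Hence over a generic point of the target the fiber of $f$ maps with finite fibers onto an open subset of $\mathbb{A}^{n-m}$, forcing $\dim f^{-1}(y)\le n-m$ there. Intersecting the open dense loci where the two bounds are valid produces the desired $U$.

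The main obstacle is the upper bound. The lower bound is a clean and uniform consequence of Krull's theorem, but controlling the fiber dimension \emph{from above} genuinely requires genericity and the transcendence-degree computation, since $K(X)$ is finite over $K(Y)(t_1,\dots,t_{n-m})$ only after discarding a proper closed subset of $X$. An alternative, slightly softer route to the same upper bound is to invoke Chevalley's theorem---so that $f$ sends closed subsets to constructible sets and the fiber dimension is upper semicontinuous---and then argue that the locus where the fiber dimension exceeds $n-m$ is a proper closed subset of the irreducible target. I would keep the explicit transcendence-basis construction as the primary argument, since it also makes the value $n-m$ transparent and ties directly to $\dim\ima(f)$.
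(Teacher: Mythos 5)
Your proof is correct, but note that the paper does not prove this lemma at all: it is quoted as a known background result with a citation to \cite[Chap.~1.6.3]{Shafarevich13}, and your argument is essentially the classical proof given there (Krull's principal ideal theorem applied to a local system of parameters at $y$ for the lower bound, valid at \emph{every} point of the image; a transcendence basis of $K(X)$ over $K(Y)$ giving a generically finite map over $\overline{\ima(f)}\times\mathbb{A}^{n-m}$ for the generic upper bound). The one point where your sketch deserves a word more is the upper bound's discarded locus: a component of a fiber could lie entirely inside the proper closed subset $Z\subset X$ where generic finiteness fails, and points of $f^{-1}(U)\setminus Z$ on such a fiber would still see its full dimension; this is repaired either by Noetherian induction on $Z\to\overline{\ima(f)}$ (shrinking $U$ further so that fibers meet $Z$ in dimension at most $n-m$) or by the Chevalley semicontinuity route you mention as an alternative, so your plan closes as stated.
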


An \emph{irreducible variety} is a variety that cannot be written as the union of two non-empty proper sub-varieties. 
Lemma \ref{lemma:constant_fiber_dim} is known as the Fiber Dimension Theorem \cite[Chap.\ 1.6.3]{Shafarevich13}. 
It establishes that the fiber dimension is constant on generic points, and that this dimension is dual to the dimension of the image parameterized by the map. This relation extends the rank-nullity theorem from Linear Algebra to polynomial maps. In particular, it says that an algebraic map \( f \) either has (generically) finite fibers or it has generically infinite fibers. In other words, all generic fibers have the same dimension, hence the behavior of a single fiber is enough to get global information.

Another standard fact in algebraic geometry is that, at a generic point in the domain of an algebraic map,
the rank of the Jacobian matrix equals the dimension of the image \cite{Shafarevich13}.

\begin{lemma}[Lemma 2.4 in Chap.~2.6 of\cite{Shafarevich13}]
\label{lemma:Shafa2.4}
If $f: X \to Y$ is an algebraic map between irreducible varieties, then, for almost all $x \in X$,
\begin{equation}
\dim \mathrm{im}(f) = \rank \jac f(x). 
\end{equation}
\end{lemma}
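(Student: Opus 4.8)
The plan is to reduce to the case of a dominant map and then invoke generic smoothness. First I would replace the target $Y$ by $Z := \overline{\ima(f)}$, the Zariski closure of the image. Since $X$ is irreducible and $f$ is a morphism, the image is a dense constructible subset of $Z$, so $Z$ is an irreducible subvariety of $Y$ with $\dim Z = \dim \ima(f) =: d$, and $f$ becomes a \emph{dominant} map $f : X \to Z$. Because $f$ factors through $Z$, the image of the differential $\jac f(x) : T_x X \to T_{f(x)} Y$ lies in $T_{f(x)} Z$ at every point where $f(x)$ is a smooth point of $Z$; consequently $\rank \jac f(x)$ is the same whether the Jacobian is regarded as landing in $T_{f(x)} Y$ or in $T_{f(x)} Z$. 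This already yields the easy inequality $\rank \jac f(x) \le \dim T_{f(x)} Z = d$ on the dense open locus where both $x$ and $f(x)$ are smooth.

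Next I would establish the reverse inequality, which is the crux. Working over $\C$ (characteristic zero), the function-field extension $\C(Z) \hookrightarrow \C(X)$ induced by the dominant map is separable, so by generic smoothness (the algebraic analogue of Sard's theorem) there is a dense open subset $U \subseteq X$ on which the differential $\jac f(x) : T_x X \to T_{f(x)} Z$ is \emph{surjective}. For such $x$ one gets $\rank \jac f(x) = \dim T_{f(x)} Z$, and after intersecting $U$ with the preimage of the smooth locus of $Z$ we have $\dim T_{f(x)} Z = \dim Z = d$. Combining with the previous paragraph gives $\rank \jac f(x) = d = \dim \ima(f)$ for all $x$ in a dense open set, which is the claim.

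An alternative route, which ties in directly with Lemma~\ref{lemma:constant_fiber_dim}, is to argue by rank--nullity on tangent spaces. At a smooth point $x$ one has $\dim T_x X = \dim X$ and $\rank \jac f(x) + \dim \ker \jac f(x) = \dim X$. The tangent space to the fiber through $x$ always sits inside $\ker \jac f(x)$, and by the Fiber Dimension Theorem the generic fiber has dimension $\dim X - \dim \ima(f)$; generic smoothness of the fibers upgrades this containment to an equality $\dim \ker \jac f(x) = \dim X - \dim \ima(f)$, whence $\rank \jac f(x) = \dim \ima(f)$. Either way, the single nontrivial ingredient is generic smoothness.

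The main obstacle is precisely this generic-smoothness step: the surjectivity of the differential on a dense open set fails in positive characteristic (the Frobenius morphism is dominant yet has everywhere-vanishing differential), so the argument genuinely uses that we work over $\C$. Everything else — irreducibility of the image closure, openness and density of the smooth loci, and rank--nullity — is routine and contributes only the straightforward inequality $\rank \jac f(x) \le \dim \ima(f)$.
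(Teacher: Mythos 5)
The paper offers no proof of this lemma: it is quoted verbatim as a standard fact, with the citation to Shafarevich serving in place of an argument, so there is no in-paper proof to compare yours against. On its own merits your proposal is correct, and it is essentially the textbook argument. Reducing to the dominant map $f\colon X \to Z$ with $Z = \overline{\ima(f)}$ is the right first move; the containment of the image of the differential in $T_{f(x)}Z$ gives $\rank \jac f(x) \le \dim T_{f(x)}Z$, and you correctly note that this only bounds the rank by $d = \dim \ima(f)$ after restricting to the locus where $f(x)$ is a smooth point of $Z$ (at singular points the Zariski tangent space is strictly larger). Your density argument there is sound: $f^{-1}(Z_{\mathrm{sing}})$ is a proper closed subset of $X$ precisely because $f$ is dominant. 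The crux, as you say, is the reverse inequality via generic smoothness, which in characteristic zero follows from separability of $\C(Z) \hookrightarrow \C(X)$; Shafarevich's own treatment runs through the same separability input (in the guise of the Jacobian criterion for algebraic independence of the coordinate functions of $f$), so your route and the cited one coincide in their single nontrivial ingredient. Two small points worth making explicit: you should also intersect with the smooth locus of $X$ itself, since the lemma does not assume $X$ smooth and $\dim T_x X = \dim X$ is needed for the rank--nullity bookkeeping (your alternative route acknowledges this, your main route should too); and your observation that the statement fails in positive characteristic (Frobenius) is exactly why the paper's Lemma~\ref{lemma:constant_fiber_dim} and its applications are phrased over $\C$ --- in the paper's actual use case the domain and codomain are linear spaces, so all the smooth-locus caveats are vacuous there.
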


From these two lemmas if follows immediately that:
\begin{corollary}
If $f: X \to Y$ is an algebraic map between irreducible varieties (over $\mathbb{C}$), then 
\begin{equation}
  \dim f^{-1}(f(x)) = \dim X - \rank \jac f(x)   
\end{equation}
    for almost all $x \in X$.
    \label{lemma:jacobian_fiber}
\end{corollary}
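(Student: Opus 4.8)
The plan is to combine the two preceding lemmas by a direct substitution, the only genuine point requiring care being the compatibility of the two ``almost all'' clauses. First I would recall that in this setting ``for almost all $x \in X$'' means that the stated equality holds on a dense Zariski-open subset of $X$, equivalently outside a proper closed subvariety of lower dimension. Lemma~\ref{lemma:constant_fiber_dim} (Fiber Dimension Theorem) provides such a set, say $U_1$, on which $\dim X = \dim f^{-1}(f(x)) + \dim \mathrm{im}(f)$, while Lemma~\ref{lemma:Shafa2.4} provides another such set, say $U_2$, on which $\dim \mathrm{im}(f) = \rank \jac f(x)$.

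The key step is to observe that $U_1 \cap U_2$ is again generic. Since $X$ is irreducible, the complements $X \setminus U_1$ and $X \setminus U_2$ are proper closed subvarieties, hence so is their union, and therefore $U_1 \cap U_2 = X \setminus \bigl((X\setminus U_1) \cup (X\setminus U_2)\bigr)$ is the complement of a proper closed subvariety; in particular it is nonempty and dense, because two nonempty Zariski-open subsets of an irreducible variety always intersect. Thus the phrase ``for almost all $x$'' can be applied simultaneously to both identities on this common locus. On $U_1 \cap U_2$ the remaining argument is purely formal: solving the Fiber Dimension identity for the fiber dimension gives $\dim f^{-1}(f(x)) = \dim X - \dim \mathrm{im}(f)$, and substituting $\dim \mathrm{im}(f) = \rank \jac f(x)$ yields the claimed equality $\dim f^{-1}(f(x)) = \dim X - \rank \jac f(x)$ for almost all $x \in X$.

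I expect the only mild obstacle to be this ``almost all'' bookkeeping: one must ensure that both lemmas are invoked with respect to the same notion of genericity, so that intersecting the two exceptional loci still leaves a dense open set on which both statements are true at once. The irreducibility hypothesis on $X$ is precisely what guarantees this, since it forbids the two generic sets from being disjoint. Beyond verifying this compatibility, the corollary is a one-line substitution, which justifies the claim in the text that it follows immediately from the two lemmas.
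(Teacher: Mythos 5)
Your proposal is correct and follows exactly the route the paper takes: the paper states that the corollary ``follows immediately'' from Lemma~\ref{lemma:constant_fiber_dim} and Lemma~\ref{lemma:Shafa2.4}, which is precisely your substitution argument. Your extra care in checking that the two generic loci intersect in a dense Zariski-open set (using irreducibility of $X$) is a valid and welcome elaboration of a step the paper leaves implicit, not a different approach.
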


\smallskip

We are now able to characterize the finite solvability of a graph \( G \) in terms of the rank of the Jacobian matrix associated with \(\mF_G\), the function that computes the fundamental matrices along the edges of \( G \).

In the following we are going to represent matrices in an affine chart, and consequently work with the affine version\footnote{\label{footnoteAffineChart}One way of fixing an affine chart in the domain $\P^{11} \times \P^{11}$ is by setting one of the 12 entries in each camera matrix to 1. Similarly, in the codomain $\P^8$, we can choose an affine chart by fixing one of the entries of the fundamental matrix to be 1, e.g., the very last entry. That way, the map $\mF^\mathrm{aff}$ becomes a rational map: Its coordinate functions are fractions $\frac{[F_{ij}]_{h,k}}{[F_{33}]_{h,k}}$ of the polynomials in \eqref{eq:Fuv_det}.} of the map $\mF$, denoted by $\mF^\mathrm{aff}: \RR^{11} \times \mathbb{R}^{11} \to \mathbb{R}^8$.
 With a little abuse of notation we are not going to distinguish between  a projective element and its affine representation, as the map where they appear will be enough to  disambiguate.

\begin{prop}
\label{lemma:Jcheck->solvable}
Let  $\mF_G^\mathrm{aff}: (\RR^{11})^{|V|} \to (\RR^8)^{|E|}$ be the affine version of the algebraic map that computes the fundamental matrices along a viewing graph $G$ with nodes $V$ and edges $E$. It 
has a Jacobian $ \jac{\mF_G^\mathrm{aff}} $ made of blocks of  size $ 8 \times 22$,  defined as:
\begin{equation}
[\jac{\mF_G^\mathrm{aff}}]_{i,j} \coloneqq  \frac{\partial  \mF^\mathrm{aff}}{\partial P_i, P_j}.
\end{equation}
Then, for a generic configuration $\mP_0$, we have:
$$ \mathrm{rank} (\jac{\mF_G^\mathrm{aff}} (\mP_0)) = 11|V| - 15    \iff   G \text{ is finite solvable} .$$
\end{prop}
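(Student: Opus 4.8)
The plan is to derive the equivalence from Corollary~\ref{lemma:jacobian_fiber} by translating ``finite solvability'' into a statement about the dimension of a generic fiber of $\mF_G$, and then reading that dimension off the Jacobian rank. Since Shafarevich's theorems are stated over $\C$, I would first pass to the complexification $\mF_G:(\C^{11})^{|V|}\to(\C^8)^{|E|}$ of $\mF_G^\mathrm{aff}$, restricted to the open set $X$ of configurations with pairwise distinct centres and nonvanishing chart denominators. This $X$ is an irreducible variety of dimension $11|V|$ (an open subset of the affine chart on $(\P^{11})^{|V|}$), and $\mF_G$ is a morphism into the irreducible $Y=(\C^8)^{|E|}$. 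Because $X$ is defined over $\RR$ and the Jacobian rank is lower semicontinuous with generically maximal value, that generic value is already attained at generic real configurations, so it suffices to argue over $\C$.

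The core of the proof is a precise relation between finite solvability and generic fiber dimension, which hinges on the geometry of the $\mathrm{PGL}_4$-action. A global coordinate change $H\in\mathrm{PGL}_4$ sends $P_i\mapsto P_iH$ and leaves every $F_{ij}$ invariant, so each fiber $\mF_G^{-1}(\mF_G(\mP))$ is a union of $\mathrm{PGL}_4$-orbits. For a generic configuration the stabiliser is trivial: $P_iH\sim P_i$ for all $i$ forces $P_i(H-\mu_iI)=0$, hence $\rank(H-\mu_iI)\le 1$, which for generic row spaces is impossible unless $H$ is scalar. Thus the orbit of a generic $\mP$ has dimension $\dim\mathrm{PGL}_4=15$, and it lies inside the fiber through $\mP$. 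I would then establish
\[
\mF_G^{-1}(\mF_G(\mP))\ \text{is finite modulo}\ \mathrm{PGL}_4 \iff \dim \mF_G^{-1}(\mF_G(\mP)) = 15.
\]
The forward direction is immediate, since a finite union of orbits of dimension at most $15$ that contains a $15$-dimensional orbit has dimension exactly $15$. For the converse, the generic point of the fiber is again a configuration with trivial stabiliser (the bad locus is a proper closed subset that does not contain the generic orbit), so the induced $\mathrm{PGL}_4$-action on the fiber has generic orbit dimension $15$, and the geometric quotient has dimension $\dim \mF_G^{-1}(\mF_G(\mP))-15=0$, i.e.\ is finite.

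Combining this with Corollary~\ref{lemma:jacobian_fiber}, for generic $\mP$ one gets
\[
\dim \mF_G^{-1}(\mF_G(\mP)) = \dim X - \rank\jac\mF_G^\mathrm{aff}(\mP) = 11|V| - \rank\jac\mF_G^\mathrm{aff}(\mP),
\]
so the generic fiber has dimension $15$ precisely when $\rank\jac\mF_G^\mathrm{aff}(\mP)=11|V|-15$. Chaining this numerical identity with the displayed equivalence and the definition of finite solvability yields the stated biconditional.

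I expect the main obstacle to be the converse direction of the displayed equivalence: making rigorous that a generic fiber of dimension $15$ is genuinely finite modulo $\mathrm{PGL}_4$ requires controlling orbit dimensions throughout the fiber rather than only at $\mP$, and invoking a well-behaved geometric quotient of the generic part of $X$. By comparison, the remaining steps, namely the bookkeeping that passes between the projective fiber and its affine chart and the transfer of genericity from $\C$ to $\RR$, are more routine.
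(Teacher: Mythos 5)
Your proposal follows essentially the same route as the paper's proof: apply Corollary~\ref{lemma:jacobian_fiber} to the irreducible linear domain $(\RR^{11})^{|V|}$ and codomain $(\RR^{8})^{|E|}$ to get $\dim {\mF_G^\mathrm{aff}}^{-1}(\mF_G^\mathrm{aff}(\mP_0)) = 11|V| - \rank \jac \mF_G^\mathrm{aff}(\mP_0)$, and identify finite solvability with generic fiber dimension $15 = \dim \mathrm{PGL}_4$; your extra steps (complexification, trivial stabilisers, orbit counting) simply make explicit what the paper compresses into the phrase that the fiber is ``a union of finitely many copies of $\mathrm{PGL}_4$''. The obstacle you flag in the converse in fact dissolves via your own stabiliser computation: \emph{every} configuration in the domain with at least two distinct centres has trivial stabiliser (not just generic ones), so all orbits in the fiber are $15$-dimensional, and since each orbit is open in its closure, a $15$-dimensional fiber is a finite union of such orbits, one per irreducible component.
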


\begin{proof}
The domain of our map is the  set of camera matrices $(\RR^{11})^{|V|}$  (interpreted as $3 \times 4$ matrices). 
The map $\mF_G^\mathrm{aff}$ is well-defined on generic cameras in this domain. Since both domain $X=(\RR^{11})^{|V|}$ and codomain $Y=(\RR^{8})^{|E|}$ of the map $\mF^\mathrm{aff}_G$ are linear spaces, they are irreducible. 
So $\mF_G^\mathrm{aff}$ is an algebraic map between irreducible varieties and Corollary \ref{lemma:jacobian_fiber} implies that:
$$ \dim {\mF_G^\mathrm{aff}}^{-1}(\mF_G^\mathrm{aff}(\mP_0)) = 11|V|
- \mathrm{rank} \jac{\mF_G^\mathrm{aff}} (\mP_0). $$
Now, finite solvability means that the generic non-empty fiber is finite modulo $\mathrm{PGL}_4$, i.e., the fiber is a union of finitely many copies of $\mathrm{PGL}_4$.  Since the latter group has dimension $15$, we obtain:
$$ \text{finite solvable }
\iff \dim {\mF_G^\mathrm{aff}}^{-1}(\mF_G^\mathrm{aff}(\mP_0)) =15,
$$
hence we get the thesis.
\end{proof}

\subsection{Our Formulation}

Our formulation employs a polynomial system where the only unknowns are the camera matrices, by using an implicit homogeneous constraint that links fundamental matrices to cameras (from \cite[Chap.\ 9]{HartleyZisserman04}). 
%
With respect to using the explicit map $\mF$ as defined in \eqref{eq:Fuv_det}, which is indeed theoretically feasible, this approach yields lower-degree polynomials and eliminates the need to account for the projective scales.

\begin{lemma}[Result 9.12 in \cite{HartleyZisserman04}]
A non-zero matrix $F_{ij}$ is the fundamental matrix corresponding to a pair of cameras $P_i$ and $P_j$ 
if and only if the matrix $  S \coloneqq  P_j^\tr   F_{ij} P_i  $ is skew-symmetric. 
\label{lemma_skew}
\end{lemma}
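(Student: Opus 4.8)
The plan is to reduce the statement to the epipolar constraint and then to an elementary fact about quadratic forms. First I would recall the defining property of the fundamental matrix: for a world point $X \in \P^3$ (a $4$-vector) the two image points are $P_i X$ and $P_j X$, and the assertion that $F_{ij}$ is the fundamental matrix of the pair means exactly that the epipolar relation $(P_j X)^\tr F_{ij} (P_i X) = 0$ holds for every $X$. Substituting $S = P_j^\tr F_{ij} P_i$, this is precisely the condition that the quadratic form $X^\tr S\, X$ vanishes for all $X \in \RR^4$. The algebraic heart of the argument is then the one-line lemma that a square matrix $S$ satisfies $X^\tr S X = 0$ for all $X$ if and only if $S$ is skew-symmetric: since $X^\tr S X = \tfrac12 X^\tr (S + S^\tr) X$, the form vanishes identically exactly when the symmetric part $S + S^\tr$ is zero.

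With this in hand the forward direction is immediate: if $F_{ij}$ is the fundamental matrix, the epipolar relation holds for all $X$, so $X^\tr S X \equiv 0$ and the quadratic-form lemma yields that $S$ is skew-symmetric. For the reverse direction I would run the lemma backwards: if $S$ is skew-symmetric then $(P_j X)^\tr F_{ij} (P_i X) = 0$ for all $X$, so the non-zero matrix $F_{ij}$ satisfies the epipolar relation on \emph{every} correspondence produced by the pair $(P_i, P_j)$. What remains is to upgrade ``satisfies the epipolar relation'' to ``equals the fundamental matrix'', and this is where I expect the main obstacle to lie, since it is a uniqueness statement rather than a formal manipulation.

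To settle the uniqueness I would argue as follows. The joint map $X \mapsto (P_i X, P_j X)$ is birational onto its image because the two back-projected rays through the distinct camera centres meet in a single point; hence the correspondences sweep out the irreducible bidegree-$(1,1)$ hypersurface in $\P^2 \times \P^2$ cut out by the genuine fundamental matrix $F^*$ of \eqref{eq:Fuv_det}. Any bilinear form $x_j^\tr F_{ij} x_i$ vanishing on this irreducible hypersurface must lie in the principal ideal it generates, which forces $F_{ij} = \lambda F^*$ for some $\lambda \neq 0$, i.e.\ $F_{ij}$ is the fundamental matrix as an element of $\P^8$. An equivalent and more computational route, which I would mention as an alternative, is to observe that $F \mapsto S = P_j^\tr F P_i$ is linear, so the admissible $F$ form the kernel of the linear map $F \mapsto \tfrac12(S + S^\tr)$ into $\mathrm{Sym}^2(\RR^4)$; the forward direction already places $F^*$ in this kernel, so the reverse direction amounts to checking that the kernel is one-dimensional, a rank computation that uses exactly the distinct-centres hypothesis. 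I would close by noting that this hypothesis is essential and is already in force in the paper, since coincident centres make $F^* = 0$ and degenerate the correspondence variety.
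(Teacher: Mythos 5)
Your argument is correct, but note that the paper itself offers no proof of Lemma~\ref{lemma_skew}: the result is imported by citation from \cite{HartleyZisserman04} (Result 9.12). Your first two paragraphs reproduce the standard textbook argument --- the epipolar constraint plus the identity $X^\tr S X = \tfrac12 X^\tr (S+S^\tr) X$ --- while your third paragraph supplies the one step that the citation leaves implicit, namely that a non-zero bilinear form vanishing on \emph{all} correspondences must be the fundamental matrix up to scale. Your hypersurface route for that step is sound and fits the paper's algebraic-geometry spirit: the map $X \mapsto (P_i X, P_j X)$ is generically injective for distinct centres, so its image has dimension three; it lies inside the zero locus of the $(1,1)$-form of the genuine $F^*$ from \eqref{eq:Fuv_det}; and since that locus is an irreducible hypersurface, the two coincide and its ideal in bidegree $(1,1)$ is spanned by $F^*$. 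Two small points deserve explicit mention. First, the irreducibility of $x_j^\tr F^* x_i$ is exactly the statement $\rank F^* \geq 2$ (a rank-one $F = ab^\tr$ makes the form split as a product of two linear factors), so you are quietly using that distinct centres force rank two --- worth a sentence. Second, you work over $\RR$ while the ideal-theoretic step is cleanest over $\C$; this is harmless here because $(P_j X)^\tr F_{ij} (P_i X)$ vanishing for all real $X$ makes it the zero polynomial, hence it vanishes on complex points too, but you should say so. Finally, your ``alternative route'' is weaker than you present it: the claim that the kernel of the linear map $F \mapsto P_j^\tr F P_i + P_i^\tr F^\tr P_j$ is one-dimensional is precisely the rank-$8$ statement that Proposition~\ref{prop:rankequality} \emph{derives from} Lemma~\ref{lemma_skew}, so borrowing it from the paper would be circular, and you never carry out the rank computation yourself; as a standalone proof, only your hypersurface argument is complete.
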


Note that any scaling of each of the three terms of the product would clearly leave the result skew-symmetric. The above condition can be rewritten as:
\begin{equation}
   S+S^\tr = 0
  \quad \Longleftrightarrow \quad 
   P_j^\tr   F_{ij} P_i  + P_i^\tr   F_{ij}^\tr  P_j = 0.
   \label{eq:skewform}
\end{equation}
Since \eqref{eq:skewform} is symmetric, it 
translates into 10 quadratic equations when considered entry-wise. 
Observe that these equations have not been used in previous works on solvability \cite{TragerOssermanAl18,ArrigoniFusielloAl21,ArrigoniPajdlaAl23}.

We write $\mathrm{Sym}_4$ for the vector space of real symmetric $4 \times 4$ matrices, and $\P\, \mathrm{Sym}_4$ for its projectivization.
Note that the latter is isomorphic to $\P^9$ since $\dim \mathrm{Sym}_4 = 10$.
Let us define:
\begin{align}
    \Phi: \P^{11} \times \P^{11} \times \P^{8} &\to \P\, \mathrm{Sym}_4 \cong \mathbb{P}^{9}, \label{eq:skewform_J}
    \\
    (P_i,P_j,F) &\mapsto P_j^\tr F P_i + P_i^\tr F^\tr P_j. \nonumber
\end{align}
Note that $\Phi$ is homogeneous in each of its inputs.
 Lemma~\ref{lemma_skew} states that there is a unique $F$ (in the projective space) such that $\Phi(P_i,P_j,F)=0$, and this is the fundamental matrix corresponding to the camera  pair ($P_i,P_j$). In formulae: 
\begin{equation}
    \Phi(P_i, P_j, F) = 0 \; \iff   F  = \mF(P_i, P_j) = F_{ij} .
    \label{eq:lemma4}
\end{equation} 

Note that Eq.~\eqref{eq:skewform} holds for a single edge $(i,j) \in {E}$. By collecting equations coming from all the edges in the graph $G$, it results in a \emph{polynomial system} 
\begin{equation}
\Phi_G((P_i)_{i \in V}, (F_e)_{e \in E}) =0.
\label{eq:polysystem}
\end{equation}
with 
$\Phi_G: (\P^{11})^{|V|} \times (\P^{8})^{|E|} \to (\P^{9})^{|E|}$.
%
Specifically, since we start from fundamental matrices given by a generic configuration $\mP_0$,  our polynomial system is 
\begin{equation} 
\Phi_G(\mP, \mF_G(\mP_0)) =0
\end{equation} 
with unknowns $\mP$.
It is clear from the definitions (and Lemma~\ref{lemma_skew}) that 
this system has a unique solution (equal to $ \mP_0$) if and only if $\mF_G^{-1}(\mF_G(\mP_0)) = \{\mP_0\}$ (modulo $\mathrm{PGL}_4$), which is tantamount to saying that  $G$ is solvable.

{Similarly to before, we restrict the maps $\Phi$ and  $\Phi_G$ to affine charts.
But since $\Phi$ vanishes on corresponding camera pairs and their fundamental matrices, this time we do not restrict the codomain to an affine chart (otherwise, we would work with fractions with vanishing denominator; cf. footnote${}^{\ref{footnoteAffineChart}}$). 
We denote the affine versions of our maps by  $\Phi^\mathrm{aff}: (\RR^{11})^2 \times \RR^8 \to \RR^{10} $ and 
$\Phi^\mathrm{aff}_G: (\RR^{11})^{|V|} \times (\RR^8)^{|E|} \to (\RR^{10})^{|E|} $.}
The following proposition links the rank of $\jac{\mF_G^\mathrm{aff}}$
to that of  the Jacobian 
of $\Phi_G^\mathrm{aff}$ with respect to cameras, thereby establishing an alternative characterization of finite solvability, which we formalize later in Theorem \ref{teo:theOne}.

\begin{prop}
\label{prop:rankequality}
The map $\mF_G^\mathrm{aff}$ is implicitly defined by $\Phi_G^\mathrm{aff}$ (in a neighborhood of a solution).
Moreover, 
for a generic configuration $\mP_0$ with fundamental matrices $\mF_0 := \mF_G^\mathrm{aff} (\mP_0)$, we have: 
\begin{equation} \label{eq:rankEquality}
\rank\left(\dfrac{\partial \Phi_G^\mathrm{aff}}{\partial (P_i)_{i \in V}} (\mP_0, \mF_0) \right) = \rank(\jac{\mF_G^\mathrm{aff}}(\mP_0)) .
\end{equation}
\end{prop}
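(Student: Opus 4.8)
The plan is to combine the Implicit Function Theorem with the chain rule, exploiting the fact that $\Phi$ is \emph{linear} in its fundamental-matrix argument. The starting point is the defining identity: by Eq.~\eqref{eq:lemma4}, for generic $\mP$ the fundamental matrices $\mF_G^\mathrm{aff}(\mP)$ are precisely the values annihilating $\Phi_G^\mathrm{aff}$, so that
\[
\Phi_G^\mathrm{aff}\bigl(\mP, \mF_G^\mathrm{aff}(\mP)\bigr) \equiv 0
\]
holds identically on a neighbourhood of $\mP_0$. Since $\mF_G^\mathrm{aff}$ is a rational map it is differentiable at the generic point $\mP_0$, and this identity is exactly the assertion that $\mF_G^\mathrm{aff}$ is implicitly defined by $\Phi_G^\mathrm{aff}$; once the $\mF$-block below is shown to have maximal column rank, the Implicit Function Theorem (applied to a square submatrix of maximal rank) guarantees that the local solution map is unique and therefore coincides with $\mF_G^\mathrm{aff}$.

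First I would differentiate the identity with respect to the cameras. The chain rule gives, at $(\mP_0,\mF_0)$,
\[
\frac{\partial \Phi_G^\mathrm{aff}}{\partial (P_i)_{i\in V}} \;+\; \frac{\partial \Phi_G^\mathrm{aff}}{\partial (F_e)_{e\in E}}\cdot \jac \mF_G^\mathrm{aff}(\mP_0) \;=\; 0,
\]
so the camera-block of the Jacobian of $\Phi_G^\mathrm{aff}$ factors as the product of the $\mF$-block with $\jac \mF_G^\mathrm{aff}$. The proposition then reduces to showing that left-multiplication by $A \coloneqq \partial \Phi_G^\mathrm{aff}/\partial (F_e)_{e\in E}$ preserves rank, i.e.\ that $A$ has full column rank $8|E|$: if $A$ is injective as a linear map then $\ker(A\,\jac\mF_G^\mathrm{aff}) = \ker(\jac \mF_G^\mathrm{aff})$, and since both matrices share the same number of columns, the rank--nullity theorem yields $\rank\bigl(\partial \Phi_G^\mathrm{aff}/\partial (P_i)\bigr) = \rank(A\,\jac\mF_G^\mathrm{aff}) = \rank(\jac \mF_G^\mathrm{aff})$, which is Eq.~\eqref{eq:rankEquality}.

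The crux — and the step I expect to be the main obstacle — is establishing that $A$ has full column rank. Here I would use two structural facts. First, $\Phi_G$ decouples over edges: the block for edge $(i,j)$ depends only on the single variable $F_{ij}$, so $A$ is block diagonal with one $10\times 8$ block per edge. Second, for a fixed edge $\Phi(P_i,P_j,\cdot)$ is linear, so its derivative in $F$ is the linear map $L_{P_i,P_j}\colon M \mapsto P_j^\tr M P_i + P_i^\tr M^\tr P_j$. By Lemma~\ref{lemma_skew} and Eq.~\eqref{eq:lemma4}, the only matrices (up to scale) annihilated by this map are multiples of the true fundamental matrix $F_{ij}$, so $\ker L_{P_i,P_j} = \langle F_{ij}\rangle$ is one-dimensional and $L_{P_i,P_j}$ has rank $9-1 = 8$ on the full space of $3\times 3$ matrices.

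Passing to the affine chart restricts $L_{P_i,P_j}$ to the $8$-dimensional tangent space $W$ of the chart (the matrices whose fixed entry vanishes); the restriction stays injective precisely when $F_{ij}\notin W$, that is, when the fixed entry of $F_{ij}$ is non-zero, which holds for a generic configuration. Hence each diagonal block has rank $8$ and $A$ has full column rank $8|E|$, as required — and this is exactly where the genericity of $\mP_0$ enters. A subtlety to handle carefully is that $A$ is rectangular ($10|E|\times 8|E|$), so the Implicit Function Theorem cannot be invoked directly but only on an $8|E|\times 8|E|$ submatrix of maximal rank; the uniqueness clause of the theorem, together with the fact that $\mF_G^\mathrm{aff}$ already satisfies all $10|E|$ equations, ensures the resulting local solution map is $\mF_G^\mathrm{aff}$ itself, closing both claims of the proposition.
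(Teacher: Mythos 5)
Your proposal is correct, and while it rests on the same structural facts as the paper's proof, the mechanism for the rank equality is genuinely different. Both arguments hinge on the observation that $\Phi$ is linear in $F$, that by Lemma~\ref{lemma_skew} and \eqref{eq:lemma4} the kernel of the linearization $M \mapsto P_j^\tr M P_i + P_i^\tr M^\tr P_j$ is $\mathrm{span}\{F_{ij}\}$ (so in a generic affine chart the $10\times 8$ block $\frac{\partial \Phi^\mathrm{aff}}{\partial F}$ has full column rank $8$), and that $\Phi_G$ decouples over edges, making $\frac{\partial \Phi_G^\mathrm{aff}}{\partial (F_e)_{e\in E}}$ block diagonal. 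From there the paper compresses the codomain with a \emph{generic} $8\times 10$ matrix $A$ so that $\frac{\partial\, A\circ\Phi^\mathrm{aff}}{\partial F}$ becomes invertible, invokes the Implicit Function Theorem to get the Jacobian formula \eqref{eq:implicitFunctionTheoremF}, deduces the rank equality because an invertible left factor preserves rank, and finally needs a genericity-of-$A$ argument to pass from $\rank\bigl(A\,\frac{\partial \Phi^\mathrm{aff}}{\partial P_i,P_j}\bigr)$ back to $\rank\bigl(\frac{\partial \Phi^\mathrm{aff}}{\partial P_i,P_j}\bigr)$ --- a step that tacitly relies on the camera block having rank at most $8$ per edge. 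You instead differentiate the identity $\Phi_G^\mathrm{aff}(\mP,\mF_G^\mathrm{aff}(\mP))\equiv 0$ directly, obtaining the factorization of the camera block as $-\bigl(\frac{\partial \Phi_G^\mathrm{aff}}{\partial (F_e)_{e\in E}}\bigr)\,\jac\mF_G^\mathrm{aff}$, and conclude by elementary linear algebra: an injective left factor preserves kernels, hence ranks. This dispenses with the auxiliary matrix $A$ entirely, makes the rank bound automatic (so your route is, if anything, tighter at the paper's genericity step), and relegates the Implicit Function Theorem to the first clause of the proposition only, which you handle correctly by applying it to a maximal-rank square $8|E|\times 8|E|$ submatrix and using the uniqueness clause to identify the local solution map with $\mF_G^\mathrm{aff}$. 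Your explicit remark that genericity enters exactly where the fixed chart entry of each $F_{ij}$ must be nonzero is accurate and slightly more precise than the paper's treatment of the chart choice.
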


\begin{proof}
Consider the function $\Phi$ defined in \eqref{eq:skewform_J}.
The Jacobian\footnote{{In fact, this is the Jacobian of 
$\tilde{\Phi}: (\mathbb{R}^{12})^{2} \times \mathbb{R}^{9} \to \mathbb{R}^{10}$,
 the homogeneous map that induces ${\Phi}$ by identifying collinear points within their projective equivalence classes. However, we omit this distinction to maintain notational simplicity.}} 
  of  $\Phi$, denoted by $\jac \Phi $, can be reorganized as:
\begin{center}
\begin{tikzpicture}
    \draw (0, 0) rectangle (3, 2); 
    \node at (1.5, 1) {$\dfrac{\partial \Phi}{\partial P_i, P_j}$};   
    \node[above] at (1.5, 2) {\footnotesize{24}};   
    \node[left] at (0, 1) {$\jac \Phi = $  };   
    \draw (3, 0) rectangle (5, 2); 
    \node at (4, 1) {$\dfrac{\partial \Phi}{\partial F}$};   
    \node[above] at (4, 2) {\footnotesize{9}};   
    \node[right] at (5, 1) {\footnotesize{10}};   
\end{tikzpicture}    
\end{center}
Since $\Phi$ is linear in $F$, this means that, for fixed cameras $P_i$ and $P_j$ with distinct centers, the $10 \times 9$ matrix $\frac{\partial \Phi}{\partial F}$ has rank 8 with kernel given by $\mathrm{span} \{ F \}$.

Since  $\Phi$ is homogeneous in each of its inputs, we can think of the matrices $P_i,P_j,F$ and $\Phi(P_i,P_j,F)$ in their respective projective spaces instead.
Recall that the tangent space of $\mathbb{P}^8$ at $F$ is the quotient vector space $\mathbb{R}^{3 \times 3}/\mathrm{span}\{ F\}$ \cite[Chap.\ 2]{Shafarevich13}. 
That means, when restricting the domain of $\Phi$ to affine charts, which we denote by $\Phi^\mathrm{aff}: \mathbb{R}^{11} \times \mathbb{R}^{11} \times \mathbb{R}^{8} \to \mathbb{R}^{10}$,
then the $10 \times 8$ Jacobian matrix $\frac{\partial \Phi^\mathrm{aff}}{\partial F}$ is of full rank $8$.

To turn this into an invertible matrix, we fix a generic $8 \times 10$ matrix $A$ and consider the composition 
$A {\circ} \Phi^\mathrm{aff}: \mathbb{R}^{11} \times \mathbb{R}^{11} \times \mathbb{R}^{8} \to \mathbb{R}^{8}$.
Since the Jacobian of $\Phi^\mathrm{aff}$ is divided into two blocks of size $10 \times 22$ and $10 \times 8$, then the Jacobian of $A {\circ }\Phi^\mathrm{aff}$ has the following structure:
\begin{center}
    \begin{tikzpicture}
    \draw (0, 0) rectangle (3, 2); 
    \node at (1.5, 1) {$  \dfrac{\partial A {\circ} \Phi^\mathrm{aff}}{\partial P_i, P_j}$};   
    \node[above] at (1.5, 2) {\footnotesize{22}};   
    \node[left] at (0, 1) {$\jac A {\circ } \Phi^\mathrm{aff} = $ };   
    \draw (3, 0) rectangle (5, 2); 
    \node at (4, 1) {$ \dfrac{\partial A {\circ} \Phi^\mathrm{aff}}{\partial F}$};   
    \node[above] at (4, 2) {\footnotesize{8}};   
    \node[right] at (5, 1) {\footnotesize{8}};   
\end{tikzpicture}
\end{center}


Since now $ \frac{\partial A {\circ} \Phi^\mathrm{aff}}{\partial F}$ is invertible, we can apply the Implicit Function Theorem: there is a function $f$ defined and differentiable  in some neighborhood of a  solution, such that
$ A \circ \Phi^\mathrm{aff}(P_i, P_j, f(P_i, P_j))= 0 $. {This is not a surprise as we already know that $\mF^\mathrm{aff}$ is this function $f$.
However, the Implicit Function Theorem also tells us that the Jacobian of the function $f= \mF^\mathrm{aff} $} is given by
\begin{equation}
\frac{\partial \mF^\mathrm{aff}}{\partial P_i, P_j}  = -\left(\frac{\partial A {\circ} \Phi^\mathrm{aff}}{\partial F}\right)^{-1}  \frac{\partial A {\circ} \Phi^\mathrm{aff}}{\partial P_i, P_j}.
\label{eq:implicitFunctionTheoremF}
\end{equation}
{(Note that the Jacobian matrices in this equality should be evaluated at a generic camera pair and their corresponding fundamental matrices, just as in \eqref{eq:rankEquality}, but we skip this here for simpler notation.)}
Since $\frac{\partial A {\circ} \Phi^\mathrm{aff}}{\partial F}$ is invertible, 
for a generic camera pair, the ranks of $\frac{\partial \, A {\circ} \Phi^\mathrm{aff}}{\partial P_i, P_j}$ and $\frac{\partial \mF^\mathrm{aff}}{\partial P_i, P_j}$ are the same. 
{Due to the genericity of the matrix $A$, this rank is the same as} 
 $ \mathrm{rank} \frac{\partial \Phi^\mathrm{aff}}{\partial P_i, P_j}$.

Observe that, by \eqref{eq:implicitFunctionTheoremF}, $\frac{\partial \, A {\circ} \Phi^\mathrm{aff}}{\partial F}$ serves as a local coordinate change between the explicit coordinates of each fundamental matrix and its implicit coordinates in terms of the skew-symmetric matrix condition, as in the following diagram:
\begin{equation}\begin{tikzcd}
	\RR^{11} \times \RR^{11} && \RR^{8} \\
	& \RR^{8}
	\arrow["\frac{\partial \mF^\mathrm{aff}}{\partial P_i, P_j} ", from=1-1, to=1-3]
	\arrow["\frac{\partial A {\circ} \Phi^\mathrm{aff}}{\partial P_i, P_j}" ', from=1-1, to=2-2]
	\arrow["\frac{\partial \, A {\circ} \Phi^\mathrm{aff}}{\partial F}", from=1-3, to=2-2]
    \arrow["", from=2-2, to=1-3, shift left]
\end{tikzcd}\end{equation}

\bigskip

Finally, we consider the map $\mF^\mathrm{aff}_G: (\mathbb{R}^{11})^{|V|} \to (\mathbb{R}^{8})^{|E|}$ that computes the fundamental matrices along a viewing graph $G$ with nodes $V$ and edges $E$.
Similarly, we extend the function $A {\circ} \Phi^\mathrm{aff}$ to 
$(A \times \ldots \times  A) {\circ} \Phi^\mathrm{aff}_G: (\mathbb{R}^{11})^{|V|} \times (\mathbb{R}^{8})^{|E|} \to (\mathbb{R}^{8})^{|E|}$.
{This gives local coordinate changes between the explicit and implicit coordinates of the fundamental matrices along each of the edges, and so as above we obtain:}
\begin{align*}
     \frac{\partial \mF^\mathrm{aff}_G}{\partial (P_i)_{i \in V}} = - \left(\frac{\partial \, A^{(|E|)} {\circ} \Phi^\mathrm{aff}_G}{\partial (F_e)_{e \in E}}\right)^{-1} \frac{\partial \, A^{(|E|)} {\circ} \Phi^\mathrm{aff}_G}{\partial (P_i)_{i \in V}}.
\end{align*}
Hence,
$\mathrm{rank} \dfrac{\partial \mF^\mathrm{aff}_G}{\partial (P_i)_{i \in V}} = \mathrm{rank} \dfrac{\partial \Phi^\mathrm{aff}_G}{\partial (P_i)_{i \in V}}$ for generic  $P_i$.
\end{proof}

Finally, we establish our \textbf{main result}, which was demonstrated in only one direction in our conference paper \cite{ArrigoniFusielloAl24}.
\begin{theorem} A graph $G$ is finite solvable if and only if 
$ \rank\left(\dfrac{\partial \Phi_G^\mathrm{aff}}{\partial (P_i)_{i \in V}} (\mP_0, \mF_0) \right)  = 11|V| - 15 $ for a generic configuration $\mP_0$ with fundamental matrices $\mF_0 := \mF_G^\mathrm{aff} (\mP_0)$.
\label{teo:theOne}
\end{theorem}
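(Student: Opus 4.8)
The plan is to treat this theorem as an immediate corollary obtained by composing the two propositions already established in this subsection, so that the real work reduces to substituting one rank equality into the other and verifying that their genericity hypotheses are mutually compatible. In other words, I do not expect to need any new algebraic-geometric machinery: the Fiber Dimension Theorem and the Implicit Function Theorem arguments have been spent, and what remains is a clean conjunction.

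Concretely, I would first apply Proposition~\ref{prop:rankequality}, which for a generic configuration $\mP_0$ with $\mF_0 = \mF_G^\mathrm{aff}(\mP_0)$ identifies the rank appearing in the theorem statement with the rank of the full Jacobian of $\mF_G^\mathrm{aff}$:
$$\rank\left(\frac{\partial \Phi_G^\mathrm{aff}}{\partial (P_i)_{i\in V}}(\mP_0,\mF_0)\right) = \rank\left(\jac \mF_G^\mathrm{aff}(\mP_0)\right).$$
I would then invoke Proposition~\ref{lemma:Jcheck->solvable}, which asserts that generically the equality $\rank(\jac \mF_G^\mathrm{aff}(\mP_0)) = 11|V| - 15$ holds if and only if $G$ is finite solvable. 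Chaining the two statements yields that the rank in the theorem equals $11|V| - 15$ precisely when $G$ is finite solvable, which is exactly the claim.

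The one point I would treat with care — and which I regard as the only potential obstacle, albeit a mild one — is the precise meaning of ``generic''. Each proposition is valid only away from some proper algebraic subset of the camera space $(\RR^{11})^{|V|}$, so a priori the two exceptional loci differ, and one must make sure a single configuration can witness both simultaneously. Since each hypothesis holds on a nonempty Zariski-open (hence dense) subset of the irreducible domain, their intersection is again nonempty, Zariski-open and dense; therefore a configuration that is generic in the sense required by \emph{both} propositions exists, and in fact the biconditional holds for every such configuration. Once this compatibility is recorded, the substitution above is purely formal and the theorem follows, so I anticipate the proof being short, essentially a one-line composition of Propositions~\ref{prop:rankequality} and~\ref{lemma:Jcheck->solvable}.
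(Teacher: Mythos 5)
Your proposal is correct and matches the paper's own proof, which is exactly the one-line composition of Propositions~\ref{lemma:Jcheck->solvable} and~\ref{prop:rankequality}. Your extra remark on intersecting the two Zariski-open genericity loci is a valid (and welcome) explicit justification of a point the paper leaves implicit.
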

\begin{proof}
It follows from Propositions \ref{lemma:Jcheck->solvable} and  \ref{prop:rankequality}.
\end{proof}

The condition on the rank of the Jacobian, often referred to as the ``Jacobian check'' in Algebraic Vision \cite{DuffKohnAl24}, ensures that solutions are finite in the neighborhood of isolated solutions but does not provide guarantees for other fibers. 
The question of whether a statement can be made about almost all fibers was left open in \cite{ArrigoniFusielloAl24}. Theorem \ref{teo:theOne} provides the answer, which ultimately relies on the Fiber Dimension Theorem. Figure \ref{fig:big_picture} provides a summary of our results as well as connections between various solvability notions.

\begin{figure}[htbp]
    \centering
    \includegraphics[scale=.2 ]{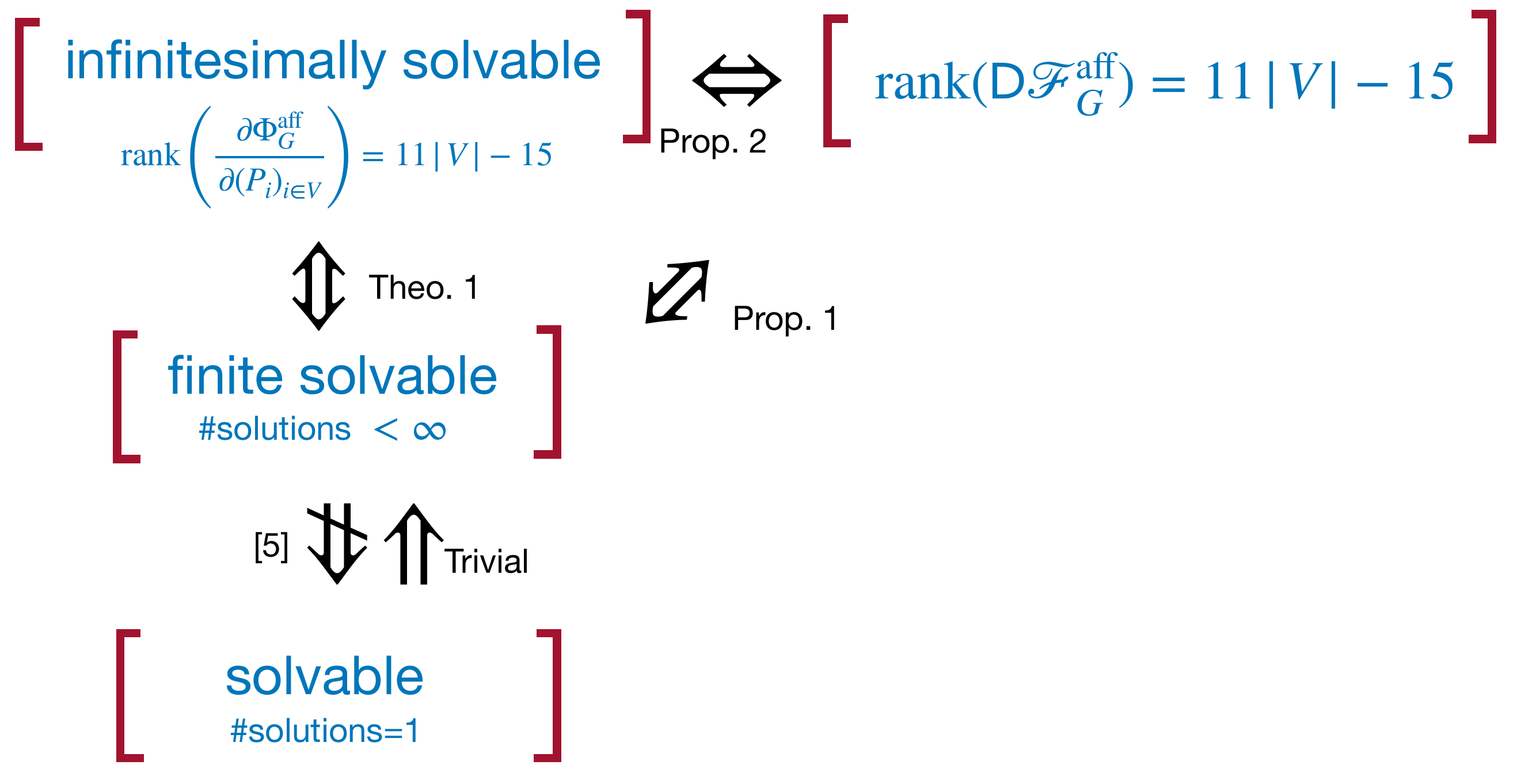}
    \caption{The connections among different concepts of solvability. The rank condition in Theorem \ref{teo:theOne} was called ``infinitesimally solvable'' in \cite{ArrigoniFusielloAl24}. }
    \label{fig:big_picture}
\end{figure}

\section{Proposed Method}
\label{sec_method}

In this section we show how Theorem \ref{teo:theOne} can be used in practice to test finite solvability. 
We also show how to partition an unsolvable graph into maximal subgraphs (called components) that are finite solvable. 

\subsection{Testing Finite Solvability}
\label{sec:testing}

The conclusion of the previous section is that, in order to establish  finite solvability of a viewing graph $G=(V,E)$, one can test if 
$r \coloneqq \rank(\frac{\partial \Phi_G^\mathrm{aff}}{ \partial (P_i)_{i \in V}})$ is equal to $ 11|V| - 15$, 
{where this $10|E| \times 11|V|$-Jacobian is evaluated at a configuration $\mP_0$ and fundamental matrices  
$\mF_G^\mathrm{aff}(\mP_0)$  (please note that these fundamental matrices are compatible by construction).}
For computational reasons (that will be clarified in the end of this section), it is preferable to test whether a matrix is full rank rather than determining its exact rank. Therefore, we include 15 additional independent equations in order to fix a basis for $\mathrm{PGL}_4$, which raises the rank by 15 (making it full rank if and only if $G$ is finite solvable).

In practice, we 
consider the 
 Jacobian  $J_P \coloneqq  \frac{\partial \Phi_G}{\partial (P_i)_{i \in V}}$, 
which has dimension  $10|E| \times 12|V|$  and rank $r$.
{The rank is the same as above because it is the codimension of the tangent space of the variety defined by $\Phi_G(\mP, \mF_G(\mP))=0$ at $\mP_0$, and that codimension is the same no matter whether one looks at an affine chart or the affine cone over the projective variety.}
The affine chart is fixed by introducing one additional equation per camera, which raises the rank by $|V|$. Overall, the Jacobian $J$  of the augmented polynomial system has  rank 
$r + 15 + |V|$ and 
 it achieves full-rank $12|V|$ if and only if $r  = 11|V| - 15$.

Specifically, the global \emph{projective ambiguity} is fixed, without loss of generality, by arbitrarily choosing the first camera and the first row in the second camera:
\begin{equation}
P_1=\begin{bmatrix}
I_{3\times3} & 0_{3\times1} 
\end{bmatrix}
\ \  \text{and} \ \ 
\left[
\begin{array}{@{\,}c@{\,}c@{\,}c@{\,}}
1 & 0 & 0
\end{array}\right]
P_2 = 
\left[
\begin{array}{@{\,}c@{\,}c@{\,}c@{\,}c@{\,}}
0 & 0 & 0 & 1
\end{array}\right],
\label{eq_ambiguity}
\end{equation}
resulting in {16 additional equations}.
Note that $[1 \ 0 \ 0] P_2$ is equivalent to selecting the first row in $P_2$. In fact, any pair of cameras can be chosen to fix the projective ambiguity. In practice, we will use two nodes that are endpoint of an edge in the graph (see Section \ref{sec_components}).

Concerning the selection of the \emph{affine chart}, the scale of each camera can be arbitrarily set, e.g., by fixing the sum of its entries to 1: 
    \begin{equation}
       \mathbf{1}_{12}^{\mathsf{T}} \text{vec}(P_{i}) = 1,
       \label{eq_scale}
    \end{equation}
where $\mathbf{1}_{12}$ denotes a vector of ones of length 12. This results in a linear equation for each node, except the first camera used to fix the global ambiguity. In total we add $16 + |V| -1 =  15 + |V|$ equations.
In summary, equations of the form \eqref{eq:polysystem} (i.e., $\Phi_G =0$), \eqref{eq_ambiguity} and \eqref{eq_scale} are all collected in a polynomial system, for a total of
$10|E| + (|V|-1) + 16 = 10|E| + |V| + 15$ equations.
The unknowns of our polynomial system are the camera matrices, for a total of $12|V|$ unknowns.

The Jacobian matrix of our polynomial system -- denoted by $J$ -- 
contains $J_P$ and the derivatives of \eqref{eq_ambiguity} and \eqref{eq_scale}.
$J_P$ is constructed by $10 \times 12$ blocks, whose formulas are given in Section \ref{sec_implementation} -- see Equations \eqref{eq:blockA} and \eqref{eq:blockB}. 
The block structure follows the incidence matrix $B$ of the viewing graph, which has one row for every edge and one column for every node.  
In the row of $B$ that represents the edge $(i,j)$, there is a $-1$ in column $i$ and a $+1$ in column $j$ and other entries are zero.  In $J_P$, the $+1$ is replaced by  the $10 \times 12$ block 
\eqref{eq:blockA} and  the $-1$ is replaced by the  $10 \times 12$ block \eqref{eq:blockB}: 
\begin{equation}
   \begin{bmatrix} 0 \cdots 0, 
  \dfrac{\partial  \Phi }{\partial ( \veco P_i  )^\tr }
  \,  , 0 \cdots 0,  \,
   \dfrac{\partial \Phi }{\partial ( \veco P_j  )^\tr } ,  0 \cdots 0
 \end{bmatrix}   .
 \label{eq_jacobian_block}
\end{equation}
Matrices of the form \eqref{eq_jacobian_block} are then  stacked for all the edges in the graph 
to make $J_P$. Note that this matrix is sparse  because B is sparse.
As for the derivatives of the  additional equations that fix scales and projective ambiguity, they are constant matrices of zero and ones.

To summarize, in our implementation:
\begin{enumerate}
    \item we assign random cameras $\mP_0$ to nodes of $G$ and compute the fundamental matrices using $\mF_G(\mP_0)$;
    \item we then  build the Jacobian as just explained;
    \item we test finite solvability by checking whether $J$ is of full rank.
\end{enumerate}
The last point is accomplished by computing the smallest singular value of $J$, which in turn is equivalent to computing the smallest eigenvalue of $J^{\mathsf{T}}J$. Checking a given rank, instead, would entail computing more eigenvalues: a number equal to the kernel dimension.

\begin{remark} 
{Theorem \ref{teo:theOne} applies to generic camera configurations, meaning that we establish if the number of generic solutions of  $\Phi_G$ is finite or not. However, additional non-generic solutions may exist -- for example, those corresponding to rank-deficient cameras, that symbolic solvers  will find. If one is interested in counting all generic solutions then one should incorporate extra equations into the polynomial system to enforce full-rank camera conditions, as was done in \cite{ArrigoniFusielloAl24}.}   
\end{remark}

\begin{figure}[t]
\centerline{
\subfloat[Three components]{
\includegraphics[width=0.49\linewidth]{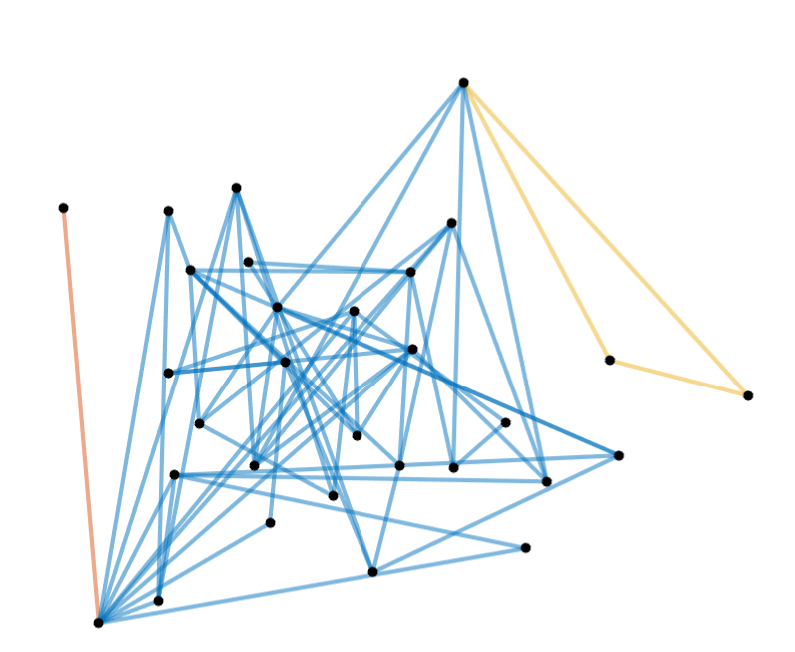}
\label{fig:comp3}
}
\hfil
\subfloat[Four components]{
\includegraphics[width=0.49\linewidth]{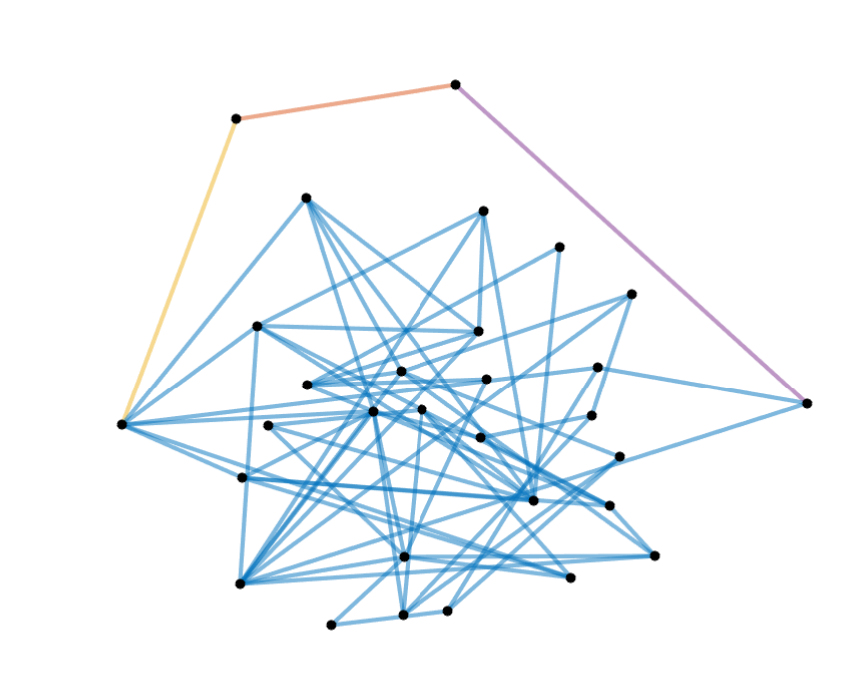}
\label{fig:comp4}
}}
\centerline{
\subfloat[Six components]{
\includegraphics[width=0.49\linewidth]{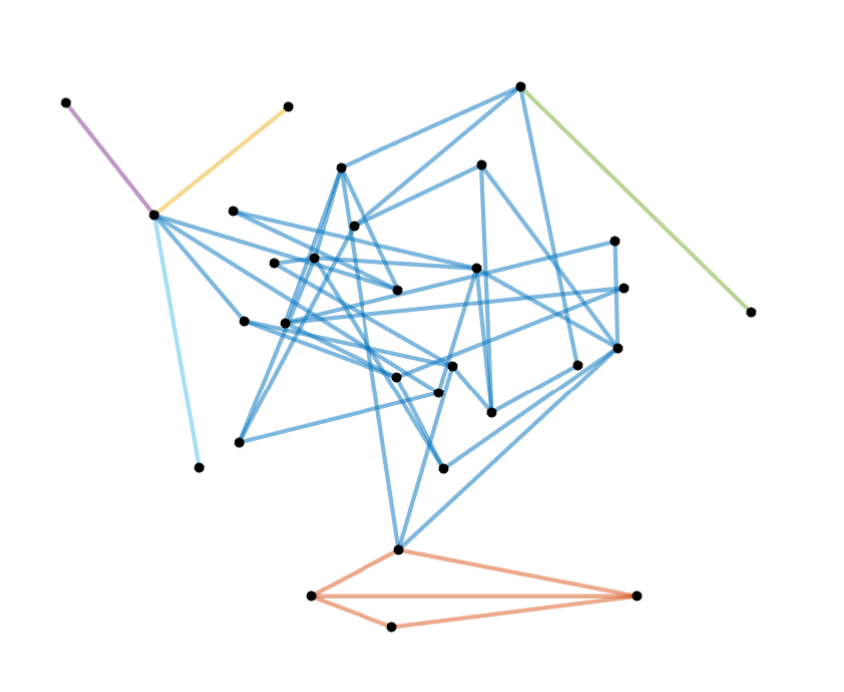}
\label{fig:comp6}
}
\hfil
\subfloat[Seven components]{
\includegraphics[width=0.49\linewidth]{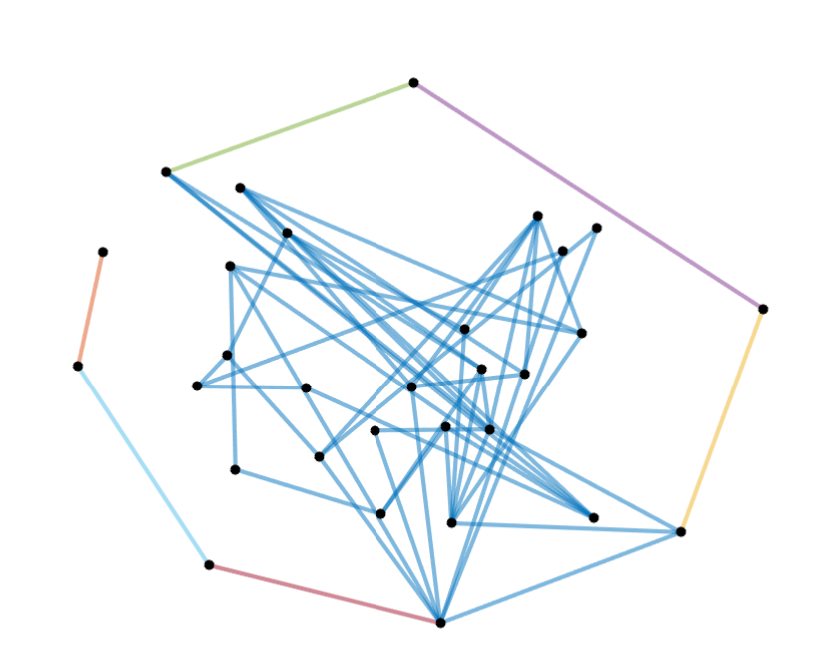}
\label{fig:comp7}
}
}
\caption{Examples of maximal components on synthetic viewing graphs, where each component is represented with a different color. }
    \label{fig:inclusion_components}
\end{figure}

\subsection{Finding Maximal Components}
\label{sec_components}

We now show how to extract maximal finite-solvable components in the case where a viewing graph is established to be unsolvable. 
Proposition 3 from \cite{ArrigoniPajdlaAl23} states that such components form a \emph{partition} of the edges. In other terms, each edge belongs to exactly one component. However, it is important to remark that a node can belong to more components. 
For example, an articulation point (or cut vertex) always belongs to two different components (see e.g. the blue and red components of Figure~\ref{fig:comp3} which share a cut vertex).

For this reason, we can not trivially use the \emph{edge-based} methodology from \cite{ArrigoniPajdlaAl23}, for our formulation  is \emph{node-based} (i.e., our unknowns are associated with the nodes in the graph). Indeed, the null space of the Jacobian matrix $J$, which is nontrivial for an unsolvable graph, is such that any block of 12 rows corresponds to a camera/node in the viewing graph (whereas in \cite{ArrigoniPajdlaAl23} there was a correspondence between rows in the null space and edges in the graph). Luckily, it is still possible to identify components from the null space of $J$ with proper modifications. In this context, an important observation is that we need two nodes to fix the global projective ambiguity: in particular, for the purpose of identifying components, it is useful to select two adjacent nodes (i.e., an edge).

\begin{lemma}
Let $J$ be the Jacobian matrix constructed as explained above, and let $N$ be the null space of $J$. A node is in the same component as the edge used to fix the global projective ambiguity $\Longleftrightarrow$ the associated rows in $N$ are zero. 
\end{lemma}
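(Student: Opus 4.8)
The plan is to read the null space $N$ as a space of infinitesimal deformations of the cameras. Writing a generic element of $N$ as $(\delta P_w)_{w\in V}$ with each $\delta P_w\in\RR^{12}$ (viewed as a $3\times 4$ matrix), membership in $N$ means two things: the linearized edge equations $\frac{\partial\Phi}{\partial P_i}\delta P_i+\frac{\partial\Phi}{\partial P_j}\delta P_j=0$ hold for every $(i,j)\in E$, and the linearized gauge equations hold, namely $\delta P_1=0$, the first row of $\delta P_2$ vanishes, and $\mathbf{1}_{12}^{\tr}\veco(\delta P_w)=0$ for $w\neq1$ (from \eqref{eq_ambiguity} and \eqref{eq_scale}). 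Let $C_1$ be the finite-solvable component containing the chosen gauge edge $(1,2)$. I would prove the equivalence as two implications: if $w\in V(C_1)$ then $\delta P_w=0$ for every element of $N$; and if $w\notin V(C_1)$ then some element of $N$ has $\delta P_w\neq0$.

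For the first implication I would restrict an arbitrary $(\delta P_w)_w\in N$ to the cameras of $C_1$. Since the edge set of $C_1$ is contained in $E$, this restriction is an infinitesimal deformation of the subframework on $C_1$ that preserves its fundamental matrices, and it still satisfies the gauge equations, which fix exactly the edge $(1,2)\subset C_1$ and the camera scales. Because $C_1$ is finite solvable, Theorem~\ref{teo:theOne} applied to $C_1$ together with the augmentation of Section~\ref{sec:testing} says that the corresponding Jacobian is of full rank, so its kernel is trivial; hence $\delta P_w=0$ for all $w\in V(C_1)$. This also forces $\delta P_w=0$ at a cut vertex $w$ belonging both to $C_1$ and to further components, which is consistent with the statement.

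For the converse I would build an explicit non-trivial deformation, and the first ingredient is a rigidity fact: for generic cameras two distinct components share at most one vertex and the components assemble into a tree. Indeed, the $4\times4$ matrices $M$ with $P_uM=cP_u$ form the $5$-dimensional stabilizer $\mathrm{Stab}(P_u)$, and for two cameras with distinct centres $\mathrm{Stab}(P_{u_1})\cap\mathrm{Stab}(P_{u_2})$ consists only of scalars (the two row spaces span $\RR^4$); if two components shared two vertices, or if several formed a cycle, the cut-vertex matching would force the relative gauges to be scalar, making the union finite solvable and contradicting maximality. Rooting the tree at $C_1$, a node $v\notin V(C_1)$ is separated from $V(C_1)$ by a unique cut vertex $u\in V(C_1)$; let $B$ be the side of $u$ containing $v$. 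I would then pick a non-scalar $M\in\mathrm{Stab}(P_u)$ and set $\delta P_w:=P_w(c_wI-M)$ for $w\in B$ and $\delta P_w:=0$ otherwise, taking $c_u=c$ so that $\delta P_u=0$ and fixing each remaining $c_w$ by the scale equation. This is a single world transformation $I-\epsilon M$ applied to $B$, so it preserves every fundamental matrix internal to $B$; on the edges leaving $B$, which are exactly the edges joining $u$ to $B$, the linearized constraint collapses to $\frac{\partial\Phi}{\partial P_j}(P_jM)=0$, which holds precisely because $M\in\mathrm{Stab}(P_u)$. Thus $(\delta P_w)_w\in N$, and since $\mathrm{Stab}(P_u)\cap\mathrm{Stab}(P_v)$ contains only scalars, a non-scalar $M$ yields $\delta P_v=P_v(c_vI-M)\neq0$.

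The main obstacle is this converse direction, and within it two linked points. The first is structural: proving that the finite-solvable components genuinely form a tree, so that a single separating cut vertex $u$ exists; I reduce this to the generic triviality of intersections of camera stabilizers together with the maximality of components. The second is verifying that the proposed deformation lies in $N$, i.e. that transforming only the $B$-side by a gauge that fixes $P_u$ does not disturb the edges incident to $u$; this is the one genuine computation, and it rests on the linearity of the edge map and on the fact that both the global projective direction $\delta P_w=-P_wM$ and the per-camera rescalings $\delta P_w=c_wP_w$ lie in the kernel of each Jacobian block, so that $\frac{\partial\Phi}{\partial P_u}(P_uM)=c\,\frac{\partial\Phi}{\partial P_u}(P_u)=0$ makes the boundary contributions cancel.
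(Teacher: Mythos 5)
Your forward implication is correct and is, in substance, the paper's own argument made precise: you restrict a kernel vector to the cameras of the gauge component $C_1$, observe that the restriction satisfies the linearized edge equations of $C_1$ together with the gauge and scale equations, and invoke Theorem~\ref{teo:theOne} plus the augmentation of Section~\ref{sec:testing} to conclude the restricted Jacobian has trivial kernel. The paper proves essentially only this direction, in one sentence ("all ambiguities have been fixed in that component, so the null space is trivial there"), deferring the rest to the reasoning of \cite{ArrigoniPajdlaAl23}. Your local computation at a boundary edge is also fine: decomposing $(\delta P_u,\delta P_j)=(0,\,P_j(c_jI-M))$ into a global direction $-M$ on both cameras, a per-camera rescaling, and the correction $(P_uM,0)=c\,(P_u,0)$, each of which lies in the kernel of the edge block, is a valid verification.

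The converse, however, rests on a structural claim that is false, and this is a genuine gap. You assert that for generic cameras the maximal finite-solvable components assemble into a tree, so that every node $v\notin V(C_1)$ is separated from $C_1$ by a unique cut vertex $u$, and you build your kernel vector as a stabilizer deformation supported on the branch beyond $u$. The square topology refutes this — and it appears in this very paper (Figures~\ref{fig:comp4} and \ref{fig:comp7}, and the zoom in Figure~\ref{fig:components_TOL}): a $4$-cycle graph is biconnected, so it has \emph{no} cut vertex at all, yet it is unsolvable and decomposes into four single-edge components (a single edge is solvable) arranged in a cycle. Your cycle-exclusion argument fails quantitatively: setting $M_1=I$ and propagating gauges around a cycle of components, closure requires an element of a product of stabilizers to lie in $\mathrm{Stab}(P_{u_k})$; modulo scalars each stabilizer is $4$-dimensional in the $15$-dimensional $\mathrm{PGL}_4$, and for the square the expected dimension $4+4+4+4-15=1$ leaves a one-parameter family of \emph{non-scalar} consistent gauges — precisely the classical square ambiguity — so the union is not finitely solvable and the cycle of components genuinely persists. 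Consequently, for such graphs there is no branch $B$ on which your deformation can be supported, and the true kernel elements (e.g.\ $\delta P_3\in P_3\,\mathrm{stab}(P_2)$, $\delta P_4\in P_4\,\mathrm{stab}(P_1)$ coupled through the edge $(3,4)$) mix stabilizer directions of several vertices rather than being a single infinitesimal $\mathrm{PGL}_4$ action on a subgraph. The converse therefore needs a different argument — the paper's proof outsources exactly this part to the component machinery of \cite{ArrigoniPajdlaAl23} — so as written your proof establishes only one direction of the equivalence.
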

\begin{proof}
Following the reasoning from \cite{ArrigoniPajdlaAl23}, we can prove the thesis based on this observation:  if we focus on the component containing the edge used to fix the global projective ambiguity, then the fact that all ambiguities have been fixed in that component, it is equivalent to say that there are no degrees of freedom, i.e., the null space is trivial on that component. 
\end{proof}

According to the above result, we can define an iterative approach to identify components: 
\begin{itemize}
    \item first, we use two adjacent nodes to fix the global projective ambiguity and identify all nodes within such component by selecting the ones corresponding to the zero rows in $N$; 
    \item then, we repeatedly apply the same procedure to the remaining part of the graph until there are no more edges to be assigned.
\end{itemize}
Observe that, although the null space is computed several times (equal to the number of components), the Jacobian matrix has a size that gets smaller and smaller, for the procedure is not re-applied to the whole graph but only to the subgraph containing the remaining edges.

Some visualizations of the components are given in Figure~\ref{fig:inclusion_components}, showing established cases of unsolvable graphs, like the ``square'' topology (\ref{fig:comp4},\ref{fig:comp7}) or the presence of pendant edges and articulation points (\ref{fig:comp3},\ref{fig:comp6},\ref{fig:comp7}).

\section{Formulas for Derivatives}
\label{sec_implementation}

In this section we report explicit formulas for the derivatives of our polynomial equations with respect to their unknowns, that are the basis of the Jacobian check implemented within our approach.

\subsection{General Formulas}

The derivatives of functions involving vectors and matrices ultimately
lead back to the partial derivatives of the individual components, and
it is all about how to arrange these partial derivatives. There are several conventions, here we follow \cite{MagnusNeudecker99}, which allows to apply the chain rule.

\begin{definition}[\cite{MagnusNeudecker99}]
	Let $f: \RR^{s\times t} \to \RR^{r \times q}$  be a differentiable function. The derivative of $f$ in $X$ is the matrix $rq \times st$:
	\begin{equation}
		\label{eq:jac}
		\jac f (X)= \frac{\partial \veco f(X)}{\partial(\veco X)^\tr}.
	\end{equation}
where $\veco$ denotes the vectorization of the matrix by stacking its columns. In particular, if $f: \RR^s \to \RR^q$  then  $ \jac f(\vx) $ coincides with the usual \emph{Jacobian matrix} of $f$. 
\end{definition}  

We will use the following lemma \cite{MagnusNeudecker99}.

\begin{lemma}
Assuming $A, X$ and $B$ are matrices of sizes $r \times s $, $ s \times t$ and $ t \times q$, respectively, then the derivatives of the following matrix functions in $X$ are:
\begin{equation}
\begin{gathered}
     \jac(AX) = (I_t \kron A) \\    
      \jac(XB) = (B^\tr \kron I_s) \\
   \jac(AXB) = (B^\tr \kron A) 
\end{gathered}
\label{eq:jacAXB} 
\end{equation}
where $\otimes$ denoted the Kronecker product.
\label{lemma_derivatives}
\end{lemma}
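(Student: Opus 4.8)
The plan is to reduce all three identities to the single well-known vectorization formula $\veco(AXB) = (B^\tr \kron A)\,\veco(X)$, which converts left- and right-multiplication into an ordinary matrix-vector product. Once this identity is available, the derivatives follow at once from the definition in \eqref{eq:jac}: the map $X \mapsto AXB$ is \emph{linear} in $X$, so its vectorization $\veco(AXB)$ is a linear function of $\veco(X)$ with coefficient matrix $(B^\tr \kron A)$, and the Jacobian of a linear map is precisely the matrix that represents it. Thus the computation of a derivative is replaced by the identification of a linear coefficient matrix.

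First I would establish (or simply cite from \cite{MagnusNeudecker99}) the identity $\veco(AXB) = (B^\tr \kron A)\,\veco(X)$. A self-contained argument verifies it on the rank-one matrices $X = e_i e_j^\tr$, for which $AXB = (A e_i)(B^\tr e_j)^\tr$ is itself rank one with vectorization $(B^\tr e_j) \kron (A e_i)$, matching $(B^\tr \kron A)(e_j \kron e_i)$; the general case then follows by linearity in $X$. This is the only step that requires an actual computation, and it is entirely routine.

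With this in hand, the third identity is immediate: by \eqref{eq:jac}, $\jac(AXB) = \partial \veco(AXB) / \partial(\veco X)^\tr = (B^\tr \kron A)$. The first two identities are then special cases obtained by inserting identity matrices. Taking $B = I_t$ gives $\veco(AX) = (I_t \kron A)\,\veco(X)$, hence $\jac(AX) = (I_t \kron A)$; taking $A = I_s$ gives $\veco(XB) = (B^\tr \kron I_s)\,\veco(X)$, hence $\jac(XB) = (B^\tr \kron I_s)$.

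If any step deserves to be called the main obstacle, it is only the verification of the base vectorization identity; every downstream claim is a one-line consequence of linearity together with the definition of the matrix derivative in \eqref{eq:jac}.
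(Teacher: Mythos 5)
Your proof is correct and takes essentially the same route as the paper, which states the lemma with a citation to Magnus--Neudecker and notes immediately afterwards that it rests on the ``vectorization trick'' $\veco(AXB) = (B^\tr \kron A)\,\veco X$ --- precisely the identity you reduce all three formulas to. Your rank-one verification of that identity and the observation that the Jacobian of a linear map is its coefficient matrix merely make self-contained what the paper delegates to the reference.
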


The above result exploits the  ``vectorization trick'' \cite{HendersonSearle81}, stating that we can write  $ \veco(AXB) = (B^\tr \kron A) \veco X$. 
Furthermore, it can be also shown that \cite{MagnusNeudecker99}:
\begin{equation}
\jac(X^\tr) = K_{s,t}    \label{eq:jactranspose}
\end{equation}
where $K_{s,t}$ is the  \emph{commutation matrix}, namely the $st \times st $  matrix  such that $ \veco (C) = K_{s,t} \veco (C^\tr)$  for any $C$ of size $ {s \times t}$. 
Moreover, for $A$ and $B$ of sizes $ {r \times s}$ and ${t \times q} $ respectively, we have \cite{MagnusNeudecker99}:
\begin{equation}
    B \kron A = K_{r,t} (A \kron B) K_{q,s} .
\end{equation}
The commutation matrix is a permutation, hence it is orthogonal: $  K_{s,t}  K_{s,t}^\tr = I_{st}  $. 
Note also that $K_{s,t} =  K_{t,s}^\tr$.

When vectorizing symmetric matrices, the $\vech$ operator is used to extract only the lower triangular part of the matrix. 
There exist unique matrices transforming the half-vectorization of a matrix to its vectorization and vice versa called, respectively, the \emph{duplication matrix} and the \emph{elimination matrix} \cite{MagnusNeudecker99}. In particular for the latter we have: $ \vech(X)  =  L_q  \veco(X) \quad  \forall X \in \mathrm{Sym}_q(\mathbb{R}).$

The reader is referred to  \cite{HendersonSearle81} for a review of results involving the Kronecker product and \cite{MagnusNeudecker99}
 for derivatives of matrix functions.

\subsection{Derivatives of $\Phi$ with Respect of Cameras}

Consider the map $\Phi$  defined in Eq.~\eqref{eq:skewform_J}. Since the codomain is given by symmetric matrices, we consider -- in practice --  only the lower triangular part:
\begin{equation}
   \Phi(P_i, P_j, F) \coloneqq  \vech \left( P_j^\tr   F  P_i + P_i^\tr   F_{ij}^\tr  P_j\right) 
\end{equation}
where the $\vech$ operator vectorizes while extracting the non-duplicated entries.
Let us define $ S \coloneqq  P_j^\tr F_{ij}  P_i $, then the map $\Phi$ rewrites:
\begin{equation}
\Phi(P_i, P_j, F) =
   \vech(S+S^\tr) .
   \label{eq:skewform_2}
\end{equation}
From the formulae \eqref{eq:jacAXB} and \eqref{eq:jactranspose}  we get:
\begin{align}
 \frac{\partial \veco S }{\partial ( \veco P_i  )^\tr }
 &= I_4 \kron ( P_j^\tr F_{ij} ) \\ 
 \frac{\partial \veco S^\tr }{\partial ( \veco P_i )^\tr }
 &= K_{4,4} (I_4 \kron ( P_j^\tr F_{ij} ) )\\
\frac{\partial \veco S }{\partial ( \veco P_j  )^\tr }
 &= 
 K_{4,4} (I_4 \kron ( F_{ij} P_i)^\tr )\\
\frac{\partial \veco S^\tr }{\partial ( \veco P_j  )^\tr }
 &= 
 I_4 \kron  ( F_{ij} P_i)^\tr
 .
\end{align}
Hence:
\begin{align}
 \frac{\partial \Phi }{\partial ( \veco P_j  )^\tr } & = L_4
(K_{4,4} + I_{16}) \left( I_4 \kron (  F_{ij} P_i)\!^\tr \right) \label{eq:blockA}\\
 \frac{\partial \Phi }{\partial ( \veco P_i )^\tr } & = L_4
 (K_{4,4} + I_{16}) \left( I_4 \kron ( P_j^\tr F_{ij} ) \right) \label{eq:blockB}
\end{align}
where $L_4$ is the elimination  matrix.
Hence the Jacobian of $\Phi$ with respect to vectorized cameras $\veco(P_j)$ and $\veco(P_i)$ is the following 10 $\times$ 24 matrix:
\begin{equation}
\begin{split}
& \frac{\partial \Phi}{\partial [(\veco P_i)^\tr| (\veco P_j )^\tr]} = \\[5pt] = & L_4(K_{4,4} {+} I_{16}) \begin{bmatrix}
   (I_4 {\kron} (  F_{ij} P_i)^\tr)
 \, |  \,
  (I_4 {\kron} ( P_j^\tr F_{ij} ) )
\end{bmatrix}
\label{eq:jacobian}.
\end{split}
\end{equation}
In the rest of the manuscript, the same Jacobian matrix has been referred to, with a slight abuse of notation, as 
$ \frac{\partial \Phi}{\partial P_i, P_j}.$

\subsection{Derivatives  of $\Phi$ with Respect to Fundamental Matrices}

By the definition of elimination matrix and commutation matrix one can rewrite 
\eqref{eq:skewform_2}  as
\begin{equation}
  \Phi(P_i, P_j, F) =
L_4 ( I_{16}  + K_{44}) \veco(S) .   
\end{equation}
Since
\begin{equation}
\veco(S) = \veco(P_j^\tr   F_{ij} P_i) =  
(P_i^\tr \kron P_j^\tr ) \veco(F)    
\end{equation}
we get
\begin{equation}
\Phi(P_i, P_j, F) =\underbrace{L_4 ( I_{16}  + K_{44})}_{10 \times 16}(P_i^\tr \kron P_j^\tr ) \veco(F) .    
\end{equation}
Hence, the the Jacobian of $\Phi$ with respect to vectorized fundamental matrices can be obtained as the following 10 $\times 9$ matrix:
\begin{equation}
\frac{\partial \Phi }{\partial ( \veco F  )^\tr }  =   L_4(K_{4,4} + I_{16})(P_i^\tr \kron P_j^\tr ) .
\end{equation}
%
In the rest of the manuscript, the same Jacobian matrix has been referred to, with a slight abuse of notation, as $  \frac{\partial \Phi}{\partial F}.$

\section{Experiments}
\label{sec_experiments}

In this section we report results on synthetic and real data. 
We implemented our method in MATLAB R2023b -- the code is publicly available\footnote{\url{https://github.com/federica-arrigoni/finite-solvability}} -- and used a MacMini M1 (2020) with 16Gb RAM for our experiments. 
We compared our approach to the one by Arrigoni et al.~\cite{ArrigoniPajdlaAl23}, that addresses finite solvability as well. 
We also discuss the efficiency of the analyzed approaches.
We did not consider the method by Trager et al.~\cite{TragerOssermanAl18} in our comparisons since it was subsumed by \cite{ArrigoniPajdlaAl23}.
We refer the reader to \cite{ArrigoniPajdlaAl23} and Section \ref{sec_size_matrices} for additional insights on the performance of \cite{TragerOssermanAl18}.

\subsection{Mining minimally solvable graphs}

A graph is called \emph{minimally solvable} if the removal of any edge results in a non-solvable graph. Recall that a necessary condition is that the graph has at least $(11n-15)/7$ edges, with $n$ being the number of nodes \cite{TragerOssermanAl18}. So, any solvable graph with this number of edges is minimally solvable.
In the need of a combinatorial characterization of minimally solvable graphs, we can build a catalog by ``mining'' them. 
In this respect, we exhaustively generated all the biconnected graphs (a necessary condition for solvability) with a given number of nodes (up to ten) having $\lceil (11 n -15) / 7 \rceil $ edges. Among these candidates, we tested for finite solvability using our method and \cite{ArrigoniPajdlaAl23}, which always returned the same result, as expected. The results are reported in Table \ref{tab:minimal_graphs}. 
Please note that among the 27 minimal graphs with 9 nodes that passed the test, there are the 10 counterexamples found by \cite{ArrigoniFusielloAl21} that are finite solvable and admit two realizations in $\RR$, showing that finite solvability $\nRightarrow$ solvability.

\begin{table}[ht]
        \centering
\caption{Minimally solvable graphs. \\ Column
``\#candidates'' reports the number of  biconnected graphs with up to 10 nodes  and  $\lceil  (11 n -15) / 7  \rceil $ edges; out of these graphs, ``\#fin\_solv'' have been found  finite solvable. }
         \begin{tabular}{@{}r r r @{}}
    \toprule
         $\#$nodes & \#candidates& \#fin\_solv\\ \midrule
         3&  1& 1\\
         4&  1& 1\\
         5&  2& 1\\
         6&  9& 4\\
         7&  20& 3\\
         8&  161& 36\\
         9&  433& 27\\
         10&  5898& 756\\
         \bottomrule
    \end{tabular}

   \label{tab:minimal_graphs}
    \end{table}

\subsection{Synthetic Data}

We then analyzed synthetic graphs with $n=20$ nodes\footnote{We also tested other values of $n$, obtaining comparable results.} generated by randomly selecting a fixed percentage of edges from the complete graph (named density), discarding disconnected graphs. We considered density values ranging from 5$\%$ to $70\%$: for each value, 1000 graphs were sampled, for a total of 8000 samples. Both our method and \cite{ArrigoniPajdlaAl23} were applied to each graph and they always gave the same output.
Results are collected in Table \ref{tab:synth_results}: as expected, when the percentage of edges decreases, the graphs are more likely to be unsolvable and the number of components increases. 
The examples  shown in Figure \ref{fig:inclusion_components} are taken from this experiment.

\begin{table}[ht]
\caption{Analysis on 1000 random graphs with 20 nodes and varying density.  
Column ``\#fin\_solv'' reports the number of graphs that passed the finite solvability test.
The last column reports the [min max] number of components. }
        \centering
       \begin{tabular}{@{}r r c @{}} 
\toprule
 \%density  & \#fin\_solv  & \phantom{a} \#comp. \phantom{aa}\\
\midrule
5 & 0 &  [10, 25] \\
10 & 2  & [1,  27]\\
20 & 253  & [1, 24] \\
30 & 826  & [1, 5] \\
40 & 977  & [1, 3] \\
50 & 999  & [1, 2] \\
60 & 1000  & [1, 1] \\
70 & 1000  & [1, 1]\\
\bottomrule
\end{tabular}  

         \label{tab:synth_results}
    \end{table}

\begin{table*}[!t]
    \centering
\caption{Results of our experiments on real SfM datasets \cite{WilsonSnavely14,CrandallOwensAl11,OlssonEnqvist11}. \\ ``Time'' is the total time (in seconds) spent building the solvability matrix (Arrigoni et al.\ \cite{ArrigoniPajdlaAl23}) or the Jacobian matrix (our approach), testing for finite solvability, and computing the  components (only on non-solvable cases).  }
\medskip
{
\begin{tabular}{@{}lrrrcrcr@{}} 
\toprule
\multicolumn{4}{c}{Dataset} & \multicolumn{2}{c}{Arrigoni et al. \cite{ArrigoniPajdlaAl23} } & \multicolumn{2}{c}{Our Method} \\
 \cmidrule(lr){1-4} \cmidrule(lr){5-6} \cmidrule(lr){7-8}
Name & \#nodes & \%density  & \#edges & \ \#comp. & Time & \ \#comp. & Time \\ 
\midrule
\rowcolor{parula5!20}Gustav Vasa & 18 & 72 & 110 & 1 & 0.10 & 1 & 0.37 \\ 
\rowcolor{parula5!20}Dino 319 & 36 & 37 & 230 & 1 & 0.07 & 1 & 0.15 \\ 
\rowcolor{parula5!20}Dino 4983 & 36 & 37 & 231 & 1 & 0.06 & 1 & 0.06 \\ 
\rowcolor{parula5!20}Folke Filbyter & 40 & 32 & 250 & 1 & 0.06 & 1 & 0.05 \\ 
\rowcolor{parula5!20}Jonas Ahls & 40 & 41 & 321 & 1 & 0.09 & 1 & 0.20 \\ 
\rowcolor{parula5!20}Park Gate & 34 & 94 & 529 & 1 & 0.13 & 1 & 0.12 \\ 
\rowcolor{parula5!20}Toronto University & 77 & 33 & 974 & 1 & 0.30 & 1 & 0.18 \\ 
\rowcolor{parula5!20}Sphinx & 70 & 55 & 1330 & 1 & 0.53 & 1 & 0.23 \\ 
\rowcolor{parula5!20}Cherub & 65 & 64 & 1332 & 1 & 0.53 & 1 & 0.24 \\ 
\rowcolor{parula5!20}Tsar Nikolai I & 98 & 52 & 2486 & 1 & 1.32 & 1 & 0.46 \\ 
\rowcolor{parula4!20}Skansen Kronan & 131 & 88 & 7490 & 1 & 10.08 & 1 & 2.07 \\ 
\rowcolor{parula4!20}Alcatraz Courtyard & 133 & 92 & 8058 & 1 & 11.59 & 1 & 2.30 \\ 
\rowcolor{parula4!20}Buddah Tooth & 162 & 73 & 9546 & 1 & 15.47 & 1 & 2.93 \\ 
\rowcolor{parula4!20}Pumpkin & 195 & 65 & 12276 & 1 & 26.16 & 1 & 4.63 \\ 
\rowcolor{parula4!20}Ellis Island & 240 & 71 & 20290 & 1 & 74.22 & 1 & 11.54 \\ 
\rowcolor{parula4!20}NYC Library & 358 & 32 & 20662 & 1 & 74.44 & 1 & 12.10 \\ 
\rowcolor{parula4!20}Madrid Metropolis & 370 & 35 & 23755 & 1 & 99.45 & 1 & 15.25 \\ 
\rowcolor{parula4!20}Tower of London & 489 & 20 & 23844 & 4 & 103.83 & 4 & 19.53 \\ 
\rowcolor{parula4!20}Piazza del Popolo & 345 & 42 & 24701 & 4 & 108.57 & 4 & 18.35 \\ 
\rowcolor{parula4!20}Union Square & 853 & 7 & 25478 & 4 & 124.94 & 4 & 27.64 \\ 
\rowcolor{parula4!20}Yorkminster & 448 & 28 & 27719 & 1 & 126.89 & 1 & 19.31 \\ 
\rowcolor{parula4!20}Gendarmenmarkt & 722 & 18 & 48124 & 4 & 411.67 & 4 & 69.81 \\ 
\rowcolor{parula4!20}Montreal N. Dame & 467 & 48 & 52417 & 1 & 462.81 & 1 & 65.53 \\ 
\rowcolor{parula4!20}Roman Forum & 1102 & 12 & 70153 & 4 & 913.57 & 4 & 158.16 \\ 
\rowcolor{parula2!20} Alamo & 606 & 53 & 97184 & 1 & 2335.94 & 1 & 222.35 \\ 
\rowcolor{parula2!20} Vienna Cathedral & 898 & 26 & 103530 & 1 & 2565.31 & 1 & 253.84 \\ 
\rowcolor{parula2!20} Notre Dame & 553 & 68 & 103932 & 1 & 2631.00 & 1 & 249.12 \\ 
\rowcolor{parula2!20} Arts Quad & 5460 & 1 & 221929 & 1 & 10979.35 & 1 & 898.56 \\ 
\rowcolor{parula2!20} Piccadilly & 2446 & 11 & 319195 & 1 & 25889.20 & 1 & 2361.20 \\ 
\bottomrule
\end{tabular}  
}
\label{tab:real_results}
\end{table*}

\subsection{Real Data}
As done in \cite{ArrigoniPajdlaAl23}, we consider real viewing graphs taken from popular structure-from-motion datasets: the Cornell Arts Quad dataset~\cite{CrandallOwensAl11}, the 1DSfM dataset \cite{WilsonSnavely14}, and image sequences from \cite{OlssonEnqvist11}. Some statistics about these graphs are reported in Table \ref{tab:real_results}, namely: the number of nodes; the number of edges; the density (i.e., the percentage of available edges with respect to the complete graph). 
Table~\ref{tab:real_results} also reports the outcome of this experiment: the number of components and the execution times of the competing methods.
Note that $\#$components=1 is equivalent to say that the graph is finite solvable. Information on the number of rows/columns of the matrix used by Arrigoni et al.~\cite{ArrigoniPajdlaAl23} and the one from our formulation \textcolor{black} is given in Figure~\ref{fig:size_real}, whereas explicit formulas are given in Section \ref{sec_size_matrices}.

\begin{figure}[ht]
\centerline{
\includegraphics[width=0.5\columnwidth]{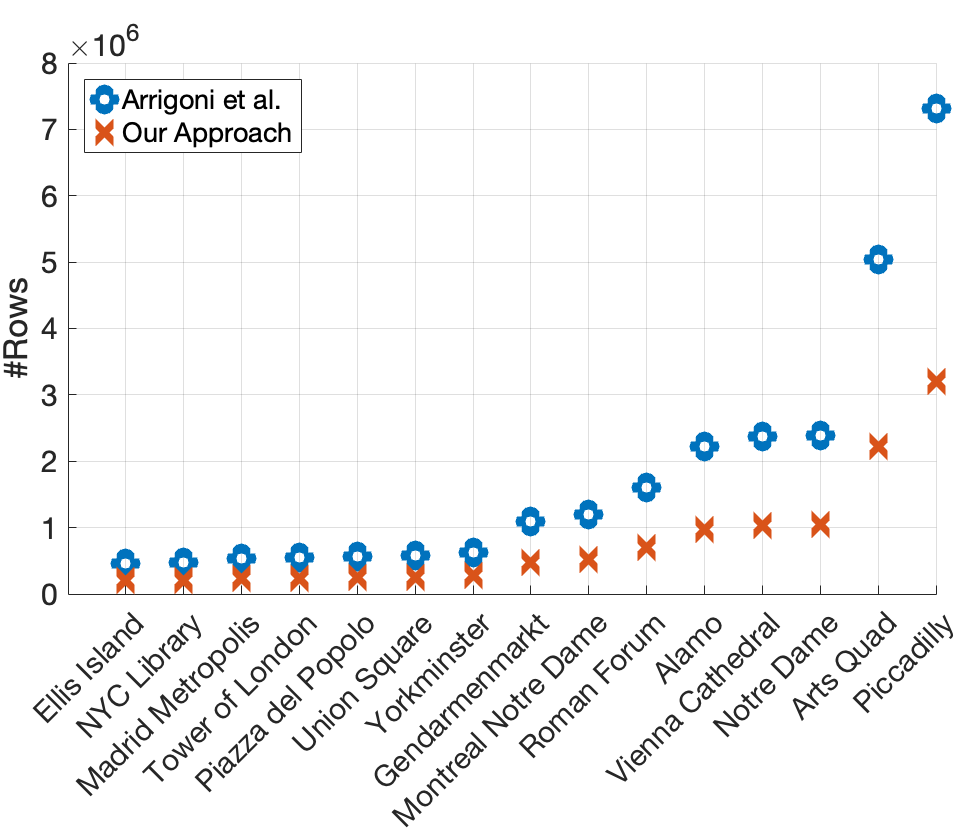}
\includegraphics[width=0.5\columnwidth]{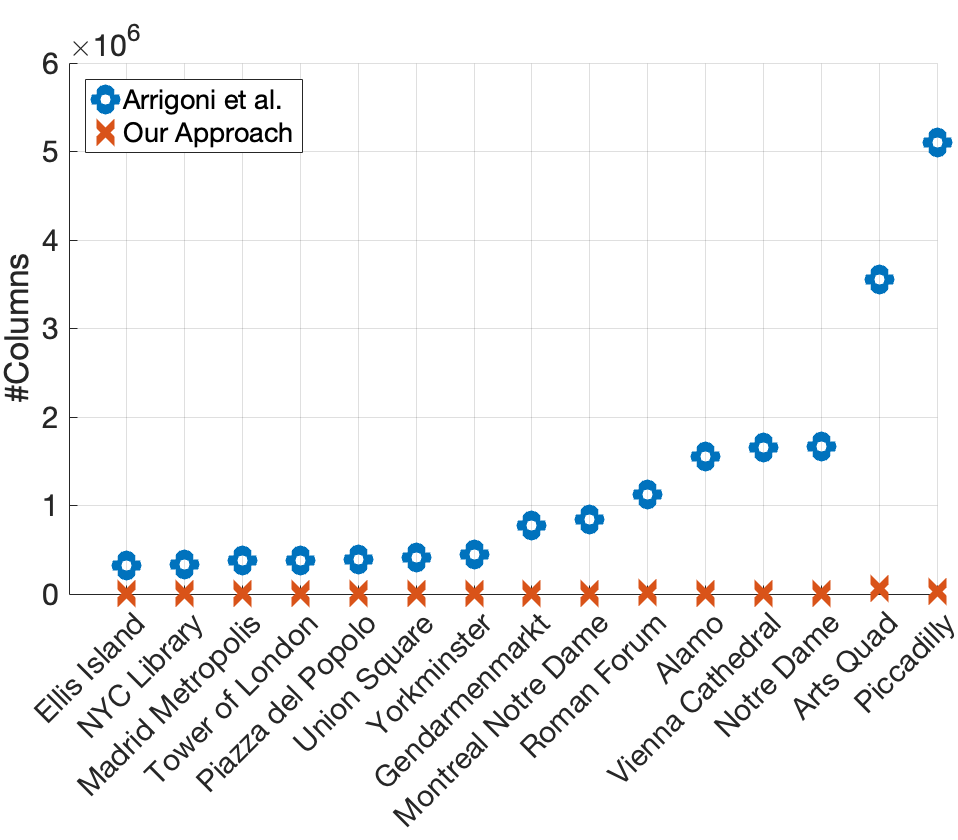}}
    \caption{ Number of rows and columns of the matrix used by Arrigoni et al.~\cite{ArrigoniPajdlaAl23} and the one from our formulation on large-scale SfM datasets \cite{WilsonSnavely14,CrandallOwensAl11}. Observe that the difference in the number of columns even surpasses one order of magnitude.
    }
    \label{fig:size_real}
\end{figure}

Results show that there are only five unsolvable cases among the analyzed graphs, all exhibiting four components, in agreement with previous work. One example is visualized in Figure \ref{fig:components_TOL}. Our method and the one by Arrigoni et al.~\cite{ArrigoniPajdlaAl23} always gave the same output on all the graphs, as expected. 
Table \ref{tab:real_results} also shows that our approach is significantly faster than the state of the art, underlying the advantage of a node-based approach with respect to an edge-based one. Indeed, the matrix employed by our formulation is significantly smaller than the one used by the authors of \cite{ArrigoniPajdlaAl23} -- this can also be seen in Figure \ref{fig:size_real} and Table \ref{tab:size_comparison}.
In particular, our direct formulation takes less than 10\% of the total running time of \cite{ArrigoniPajdlaAl23} on the largest examples (from ``Alamo'' to ``Piccadilly''). This figure becomes 20\% for medium size datasets (from ``Skansen Kronan'' to ``Roman Forum''). For the smallest ones the running time is less than a second and the comparison becomes meaningless.

\begin{figure}[ht]
\centering
\includegraphics[width=0.95\columnwidth]{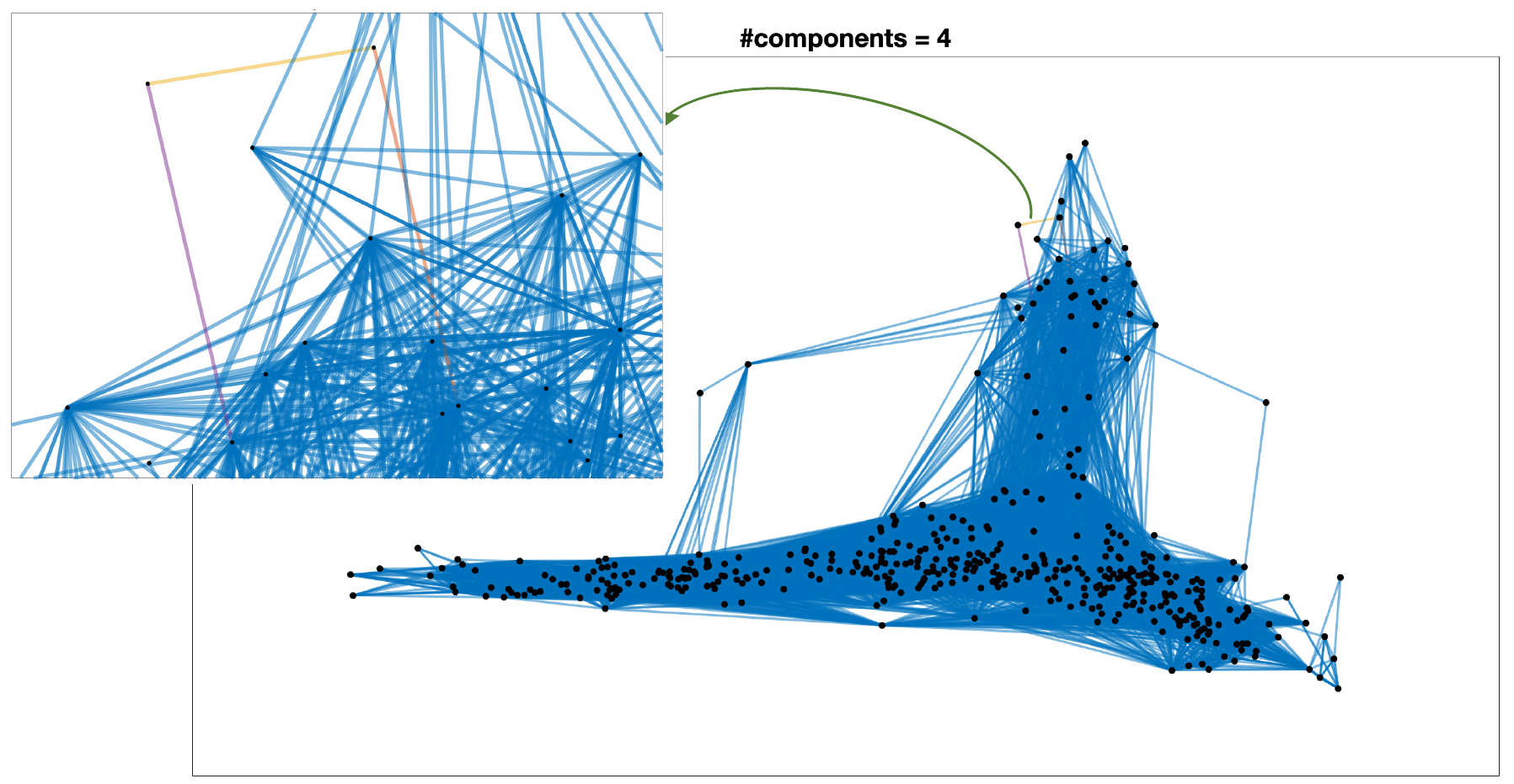}
    \caption{
    Viewing graph of the Tower of London dataset \cite{WilsonSnavely14} and maximal components (color-coded). Edges in the largest components are depicted in blue. A zoom is drawn to better visualize the non-solvable part, comprising three edges which resemble the square topology.
    }
    \label{fig:components_TOL}
\end{figure}


\subsection{Number of Rows/Columns for Different Formulations}
\label{sec_size_matrices}

Here we summarize the comparison with \cite{ArrigoniPajdlaAl23} and \cite{TragerOssermanAl18} in terms of the size of the respective matrices. Recall that $n=|V|$ and $m=|E|$ in a graph $G$ with $V$ vertices and $E$ nodes.

\smallskip 

\textbf{Trager et al.~\cite{TragerOssermanAl18}.}
The solvability matrix of \cite{TragerOssermanAl18} is made of blocks, where each block comprises 20 equations. The number of blocks per node is $d_i (d_i-1) /2 $, where $d_i$ denotes the degree of node $i$ (see Table 2 in \cite{ArrigoniPajdlaAl23}). 
By summing over all the nodes in the graph, a formula is obtained for the number of rows of the solvability matrix (or, equivalently, the number of equations):
\begin{equation}
 e_1 
 =  10 \sum_{i=1}^n ({d_i^2} - d_i) + 15 + m = 10 \sum_{i=1}^n {d_i^2} - 19 m + 15
 \label{eq_unknowns_trager}
\end{equation}
where $15 + m$ accounts for the additional equations introduced to remove the ambiguities and $\sum_{i=1}^n d_i = 2m$ due to the \emph{degree sum formula} \cite{Harary72}. 
Exploiting the Cauchy-Schwarz inequality 
we obtain:
\begin{equation}
    \sum_{i=1}^n d_i^2 \geq \frac{1}{n} \left( \sum_{i=1}^n d_i \right)^2
\end{equation}
hence, using again the degree sum formula, we get the following lower bound for $e_1$: 
\begin{equation}
 e_1 \geq    \frac{10}{n} \left( 2m  \right)^2 - 19m + 15 = 
 40 \frac{m^2}{n} -19m + 15.
\label{eq_5}
\end{equation}
Hence the number of rows grows asymptotically (at least) as $O(n^3)$ for a dense graph (where $ m \approx O(n^2) $) and it grows as $O(n)$ for a sparse one (where $ m \approx O(n) $).

\medskip

\textbf{Arrigoni et al.~\cite{ArrigoniPajdlaAl23}.} The reduced solvability matrix used by \cite{ArrigoniPajdlaAl23} is made of blocks of 11 equations. The number of blocks per node is $d_i-1$ (therefore it scales linearly in the degree of node $i$ whereas in \cite{TragerOssermanAl18} the growth is quadratic).
Hence the total number of rows (i.e., equations) is given by:
\begin{equation}
\begin{aligned}
  e_2 &=   11 \sum_{i=1}^n (d_i - 1 ) + 15 + m = 11 \sum_{i=1}^n d_i -  11 n + 15 + m \\
  &=  23 m -  11 n + 15   .  
\end{aligned}
 \label{eq_6}
\end{equation}
The above formula implies that the number of rows of the reduced solvability matrix grows asymptotically  as $O(n^2)$ for a dense graph and $O(n)$ for a sparse one.
Away from the limit case of a perfectly sparse graph with $ m = O(n) $, there is an advantage of this formulation with respect to \cite{TragerOssermanAl18}. 
In concrete terms, it is enough that $m > n$ to ensure that  $e_2 \leq e_1$: indeed, after proper simplifications, \eqref{eq_5} $ \geq $ \eqref{eq_6} becomes
$40 m^2 + 11 n^2  \geq 42 nm  >  42 n^2$,
which reduces to $ 40 m^2 >  31 n^2$, which is always satisfied under the hypothesis $m >  n$.  
The number of columns (i.e., variables) is the same for \cite{TragerOssermanAl18} and \cite{ArrigoniPajdlaAl23}, and it is given by 
\begin{equation}
  v_1=v_2=16m.  
\end{equation}
We refer the reader to \cite{ArrigoniPajdlaAl23} for additional information on the performance of \cite{TragerOssermanAl18} on real-world datasets.

\medskip

\textbf{Our Formulation.}
As explained in Section \ref{sec_method}, our polynomial system employs a total of
\begin{equation}
    \begin{gathered}
  e_3=10m+n+15 \ \text{ equations} \\
  v_3=12n \ \text{ unknowns.}
    \end{gathered}
\end{equation}
Hence, the number of rows of our Jacobian matrix grows asymptotically as $O(n^2)$ for a dense graph and $O(n)$ for a sparse one.
In concrete terms, however, $e_3 \leq e_2$ as soon as $m \ge \frac{12}{13} n \approx n$, which is typically satisfied.

\medskip 

A summary is reported in Table \ref{tab:size_comparison}. Note that our formulation is the only one where the number of columns scales with the number of nodes (instead of edges) in the graph, as ours is the first node-based method. Observe also that practical datasets are far from the sparse graph approximation, as the number of edges is much larger than the number of nodes.

\begin{table}[htbp]
    \caption{Number of equations/unknowns for the three formulations. The row counts are in decreasing order for typical graphs.
    }
    \label{tab:size_comparison}
    \centering
   \begin{tabular}{@{}llc@{}}
\toprule
Method   &   \#rows    &   \#columns  \\ \midrule
Trager et al.~\cite{TragerOssermanAl18}   & $ \geq 40 {m^2}/{n} -19m + 15 $ & $ 16m $ \\ 
Arrigoni et al.~\cite{ArrigoniPajdlaAl23} & $ 23 m - 11 n + 15 $  & $ 16m$ \\
Ours      &  $ 10m + n + 15 $      & $  12n$     \\ \bottomrule
\end{tabular}
\end{table}


\section{Conclusion}
\label{sec_conclusion}

This paper underscored the viewing graph as a powerful representation of uncalibrated cameras and their geometric relationships. The solvability of the graph corresponds to the existence of a unique set of cameras, up to a single projective transformation, that conforms to the given fundamental matrices. Our focus was on the relaxed notion of finite solvability, which considers the finiteness of solutions rather than strict uniqueness. This approach is computationally tractable and enables the analysis of large graphs derived from structure-from-motion datasets.

We presented a novel formulation of the problem that provides a more direct approach than previous literature -- based on a formula that explicitly establishes links between pairs of cameras through their fundamental matrices, as suggested by the definition of solvability. 
Building upon this, we developed an algorithm designed to test finite solvability and extract components of unsolvable cases, surpassing the efficiency of previous methods. 
The core methodology is mathematically sound and extremely simple, as it only requires computing the derivatives of polynomial equations with respect to their unknowns, and checking the rank of the resulting Jacobian matrix. 
Although the Jacobian check, by definition, applies only to a neighborhood of a particular solution, in our case, this local information extends globally due to the special structure of the problem. We formally established this result—originally conjectured in our preliminary study \cite{ArrigoniFusielloAl24}—using tools from Algebraic Geometry.

The concept of finite solvability, while valuable, represents only a partial step toward a computationally efficient characterization of viewing graph solvability. Its inherent limitation lies in asserting the existence of a finite number of solutions rather than guaranteeing a unique one. The challenge of efficiently verifying uniqueness in large structure-from-motion graphs remains an open question. We hope that our results will inspire further research in this intriguing direction.

%
%

\section*{Acknowledgements}
Federica Arrigoni was supported by PNRR-PE-AI FAIR project funded by the NextGeneration EU program. Tomas Pajdla was supported by the OPJAK CZ.02.01.01/00/22 008/0004590 Roboprox Project.
Kathlén Kohn was supported by the Wallenberg AI, Autonomous Systems and Software Program (WASP) funded by the Knut and Alice Wallenberg Foundation.

\printbibliography

\end{document}